\pgfplotsset{width=7cm,compat=newest}
\newcommand{\dtv}{\bar{\delta}_{TV}(p, \hat{p})}
\newtheorem{thm}{Theorem}
\newtheorem{lem}{Lemma}
\newtheorem{cor}{Corollary}
\newtheorem{defn}{Definition}
\newcommand{\mc}[1]{\mathcal{#1}}
\newcommand{\mb}[1]{\mathbb{#1}}
\newcommand{\bmat}[1]{\begin{bmatrix} #1 \end{bmatrix}}
\DeclarePairedDelimiterX{\inp}[2]{\langle}{\rangle}{#1, #2}
\begin{document}
\title{Analyzing Data Selection Techniques with Tools from the Theory of Information Losses}

\author{\IEEEauthorblockN{Brandon Foggo}
\IEEEauthorblockA{\textit{Department of Electrical and Computer Engineering} \\
\textit{University of California, Riverside}\\
Riverside, USA \\
bfogg001@ucr.edu}
\and
\IEEEauthorblockN{Nanpeng Yu}
\IEEEauthorblockA{\textit{Department of Electrical and Computer Engineering} \\
\textit{University of California, Riverside}\\
Riverside, USA \\
nyu@ece.ucr.edu}
}

\IEEEoverridecommandlockouts
\IEEEpubid{\makebox[\columnwidth]{978-1-6654-3902-2/21/\$31.00~\copyright2021 IEEE \hfill} \hspace{\columnsep}\makebox[\columnwidth]{ }}
\maketitle
\tikzset{
		block/.style = {draw, fill=white, rectangle, minimum height=1em, minimum width=3em},
		tmp/.style  = {coordinate}, 
		sum/.style= {draw, fill=white, circle, node distance=1cm},
		input/.style = {coordinate},
		output/.style= {coordinate},
		pinstyle/.style = {pin edge={to-,thin,black}
		}
	}



\begin{abstract}
In this paper, we present and illustrate some new tools for rigorously analyzing training data selection methods. These tools focus on the information theoretic losses that occur when sampling data. We use this framework to prove that two methods, Facility Location Selection and Transductive Experimental Design, reduce these losses. These are meant to act as generalizable theoretical examples of applying the field of Information Theoretic Deep Learning Theory to the fields of data selection and active learning. Both analyses yield insight into their respective methods and increase their interpretability. In the case of Transductive Experimental Design, the provided analysis greatly increases the method's scope as well. 
\end{abstract}

\begin{IEEEkeywords}
Information Losses, Training Data Selection, Learning Theory, Facility Location, Transductive Experimental Design.
\end{IEEEkeywords}

\section{Introduction}
Machine learning based classifiers are influenced heavily by the quality of the labeled data they are trained on. But finding high quality samples to label can be challenging, and labelling too many samples can be expensive. This paper provides a new framework for analyzing data labelling methods based on information theory. The framework studies a quantity called information losses, which is a measure of quality on representations learned from the labeled samples.

Information losses have been the subject of some rigorous theoretical work \cite{shamir2010learning, Foggo2019InformationLI}. In reference \cite{Foggo2019InformationLI} in particular, there exists a useful theorem which decomposes information losses as a product of two terms - one depending primarily on architecture, and the other depending primarily on the dataset used to train the classifier. The work provided an analysis of the latter term in the case of \emph{randomly} selected data. 

In this paper, we extend this theory to the case of \emph{non-randomly} selected data. First, we will study the relationship of information losses to two popular existing training data selection techniques - facility location function selection (FLFS) and transductive experimental design (TED). In each analysis, we will bound information losses in terms of the quantities minimized under these methods. 

These analyses are useful for three reasons. First, they provide generalizable examples for the application of Information Theoretic Machine Learning Theory (ITMLT) to the field of training data selection and active learning. Since ITMLT has shown itself to be a useful subfield of machine learning theory, generalizing these analyses can lead to the development of new, powerful, and interpretable training data selection techniques. Secondly, these analyses will bring further knowledge into the methods being studied, giving us insight into which situations these methods are particularly suitable. 

Finally, the analysis of transductive experimental design in terms of information losses significantly widens the method's scope. Previously, the application of TED has been limited to linear and kernel regression problems with Gaussian noise - the optimized terms are directly derived from the error covariances that arise from solutions to these problems. Of course, one could apply these techniques to other problems (e.g. classification) as well, but the application would not be theoretically justified. This paper will show that these same terms naturally reduce information losses, which are themselves linked to the performance of representations well beyond those appearing in linear and kernel regression problems.

\section{Related Work}
The subject of training data selection is extensive. We will consider a coarse division of the field. On one side of this divide is batch mode learning, which selects data all at once. Methods on the batch mode side include the collection of literature on sensor placement \cite{krause2008near, guo2010active}, facility location based methods \cite{sener2018active}, and transductive experimental design \cite{yu2006active}. On the other side of the divide is active learning, which selects new data in iteration by training a new classifier on the currently selected data. Most methods of this type follow from a powerful idea: label the data points that our current classifier is most uncertain of \cite{melville2004diverse, seung1992query, joshi2009multi, elhamifar2013convex, gal2017deep, ganti2012upal, hoi2006large, kremer2014active}. Much can be found in comprehensive texts \cite{settles2012active, tong2001active} and literature surveys \cite{settles2012activesurvey}. While most work in the field of training data selection falls on the active learning side, our framework is applicable to both parts of the division. 

The field is ripe with active learning algorithms that are highly justified within the classical/PAC statistical learning theory \cite{bousquet2003introduction}. Beginning with the CAS algorithm \cite{cohn1994improving}, and being subsequently improved upon in terms of applicability \cite{balcan2009agnostic, dasgupta2008general, beygelzimer2008importance, beygelzimer2010agnostic}, this branch of work rigorously derives algorithms which obtain label complexities, for seperable data, of ${O(\theta d~log~\frac{1}{\epsilon})}$ where $d$ is the $VC$ dimension of the hypothesis space, $\epsilon$ is the desired classification error, and $\theta$ is a useful quantity called the disagreement coefficient of the dataset/hypothesis space pair \cite{hanneke2007bound}. This is an exponential improvement over the label complexity required of random labelling, which needs $O(\frac{d}{\epsilon})$ labels for the same error rate under the same classical learning theory.  

Some early work in the above path even uses information theoretic notions \cite{freund1997selective, gilad2006query}. Specifically, they maintain a probability distribution over the hypothesis space, and data is selected such that the entropy of that distribution is minimized when conditioned on the event ${\{h: h \text{ is consistent with the labelled data}\}}$. Unfortunately, this notion of information is not placed on the class/representation variables themselves, and so they cannot use Fano's inequality in assessing their complexity - instead, they also rely on classical learning theory, obtaining complexities again on the order of $O(d~log~\frac{1}{\epsilon})$ while having the additional complication of needing to maintain and sample from a sequence of posterior distributions on the hypothesis space. 

While the above analyses are fantastic for machine learning algorithms which conform to classical learning theory, we run into problems when we attempt to adapt them to deep learning methods. This is because classical learning theory does not appear to predict the empirical effectiveness of deep learning methods. For example, while the size of a network grows, $d$ increases quite quickly, but the label complexity of the learner drops in experiment, even in the randomly selected case. That is, experimentally observed label complexities in deep learning are far smaller than those predicted by methods in classical learning theory.  

Thus we turn to a promising emerging field of learning theory which links deep learning to information theory \cite{tishby2015deep, shamir2010learning, Foggo2019InformationLI, shwartz2017opening, achille2017emergence}. Many of those active learning methods derived from classical learning theory may be analyzed with this new framework, perhaps giving more satisfying label complexities when applied to deep learning. To our knowledge, there is no previous work in data selection theory which employs this more modern theory of deep learning. 

\section{Notation}
The following notations will be used throughout this text. 

We will only work two simple types of spaces, real vector spaces under the standard norm with the Lebesgue measure on the Borel $\sigma$-algebra, and discrete spaces with the counting measure on the discrete $\sigma$-algebra. We assume that all random variables considered have associated distribution functions and that they all share a base sample space $\Omega$. When we say $A$ is a random variable on $\mc{A}$, we mean that $A: \Omega \to \mc{A}$ is a measurable function. For any considered random variable $A$, we will denote its distribution function as $p_{A}$. For any tuple of considered random variables $(A_1, A_2, \cdots)$, we will denote their joint distribution as $p_{A_1A_2\cdots}$. Conditional distributions will be denoted similarly, but with a bar in between the variables. For example, the distribution of $A_2$ conditioned on variable $A_1$ will be denoted as $p_{A_2|A_1}$. 

Let $\mc{X}$ denote a finite dimensional real vector space with the standard norm and let $\mc{Y}$ be a discrete space. Let $X: \Omega \to \mc{X}$ be a random variable on $\mc{X}$ and $Y: \Omega \to \mc{Y}$ a random variable on $\mc{Y}$. For $i=1,2,\cdots, N$, let $(x_i, y_i) \in \mc{X}\times\mc{Y}$ and let $\mc{D}_{XY} = \left((x_1, y_1), (x_2, y_2), \cdots, (x_n, y_n) \right)$ denote a fixed sample of size $N$ consisting of those points. Let $\mc{D}_X = (x_1, x_2, \cdots, x_N)$ and $\mc{D}_Y = (y_1, y_2, \cdots, y_N)$. Let $\mb{P}_{\mc{D_{XY}}}$, $\mb{P}_{\mc{D_{X}}}$, and $\mb{P}_{\mc{D_{Y}}}$ denote the empirical measures associated with $\mc{D}_{XY}$. These $\mc{D}$ sets are meant to represent a full dataset. We will also consider subsets of this `full' dataset as training samples, which we will denote with the letter $\mc{S}$. However, we will not fix any particular training set since this is a paper on training data selection. 

\section{Information Losses} \label{losses}
\subsection{Measuring Informativeness}
Picking informative data points is difficult. This is partially because it is difficult to define the `informativeness' of a data-point in the first place. In this section, we will construct a measure which quantities the informativeness of a sample directly through that sample's potential to create meaningful representations that are predictive of $Y$. This will arise from a generalization of a discrete-valued approach from literature \cite{shamir2010learning}. We first give an intuitive, non-rigorous outline of this discrete procedure before moving on to our rigorous generalization. 

In such an approach, we first consider the probability mass function $p(x,y)$ and the histogram $\hat{p}(x,y)$ obtained from a sampled dataset. Next, we consider any random variable $Z=g(X)$ (where $g$ may be either deterministic or stochastic). We then  construct the information theoretic quantity ${I(Y,Z^*)}$ - the mutual information between $Z$ and $Y$. If we then consider some `best possible' random variable $Z^*$ (in terms of $I(Y;Z)$) that we could construct, say, with infinite data, and also consider the `best possible' random variable $\hat{Z}$ (again in terms of $I(Y;\hat{Z})$) that we can construct with our sample, then the difference $|I(Y;Z^*) - I(Y;\hat{Z})|$ is then a measure of the sample's quality. 

For our rigorous generalization to a non-discrete feature space, we will need a component to replace the histogram $\hat{p}(x,y)$ since this won't be reliably constructable. We do this by essentially considering \emph{any} replacement distribution $\hat{p}_{Y|X}$ in the above process, and then specifying to one which we obtain in a standard way from some machine learning algorithm. A lot of this setup will come from reference \cite{Foggo2019InformationLI}. However, our setup here is slightly more general, and a bit more clear than that of the reference. 
\begin{defn}\label{defn:iloss1}
    Let $\mc{Z}$ be a finite dimensional real vector space with the standard norm and let $Z$ be a random variable on $\mc{Z}$ satisfying the Markov chain ${Y-X-Z}$ (i.e., $Y$ and $Z$ are conditionally independent given $X$). Let $\hat{Y}$ be a random variable on $\mc{Y}$ satisfying the Markov chain $\hat{Y}-X-Z$. We denote the type one information loss between $Y$ and $\hat{Y}$ associated with $Z$ as the quantity:
    \begin{align}
        I^{(1)}_{Loss}(Y \to \hat{Y}; Z) = |I(Y;Z) - I(\hat{Y}; Z)|
    \end{align}
\end{defn}

\begin{defn}\label{defn:iloss2}
    Let $\mc{Z}$, $Z$, and $\hat{Y}$ be as they were in definition \ref{defn:iloss1}. Let $\epsilon>0$. Denote as $Z_{\epsilon}^*(t)$ the set of random variables on $\mc{Z}$ that are at most ${\epsilon \text{-suboptimal}}$ for the following optimization problem: 
    \begin{align*}
    \underset{Z^*: Y-X-Z^*}{sup~}&{I(Y;Z^*)}  \\
    \text{subject to }& I(X;Z^*) \leq I(X;Z)
    \end{align*}

    Then the type two information loss between $Y$ and $\hat{Y}$ associated with $Z$ is given by the quantity:
    \begin{align}
        I^{(2)}_{Loss, \epsilon}(Y \to \hat{Y}; Z) = \underset{Z^* \in Z^*_{\epsilon}, \hat{Z} \in Z_{\epsilon}(\hat{Y})}{\sup~}|I(Y;Z^*) - I(Y; Z)|
    \end{align}

\end{defn}

On these information losses, we have the following lemma due to reference \cite{Foggo2019InformationLI}:
\begin{lem} \label{lem:info_bound}
Let $\mc{Z}$, $Z$, and $\hat{Y}$ be as they were in definition \ref{defn:iloss1}. Let ${\epsilon>0}$. Let $\hat{p}_{Y|X}=p_{\hat{Y}|X}$. Then $I^{(2)}_{Loss, \epsilon}(Y \to \hat{Y}; Z)$ is less than or equal to the following expression: 
\begin{align}
2 \left( \vphantom{\sum} \dtv I(X;Z) + h_2(\dtv) + \epsilon \right) 
\end{align}
where:
\begin{equation}\label{ctv}
\dtv = \mathbb{E}_{\mathbb{P}_X}\left[ \frac{1}{2}\sum_y \left|p(y|x) - \hat{p}(y|x)\right| \right]
\end{equation}
is called the \emph{conditional total variation} of $\hat{p}_{Y|X}$ from $p(y|x)$
\end{lem}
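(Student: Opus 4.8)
The plan is to reduce the type-two loss to a single \emph{continuity estimate} for mutual information and then spend the effort on that estimate. Write $V_Y$ and $V_{\hat Y}$ for the optimal values of the two rate-constrained problems implicit in Definition \ref{defn:iloss2}: the one for $Y$, and the analogous one for $\hat Y$, both carrying the \emph{same} constraint $I(X; Z') \le I(X;Z)$. Every $Z^* \in Z^*_\epsilon$ then satisfies $V_Y - \epsilon \le I(Y;Z^*) \le V_Y$ and every $\hat Z \in Z_\epsilon(\hat Y)$ satisfies $V_{\hat Y} - \epsilon \le I(\hat Y; \hat Z) \le V_{\hat Y}$, so the quantity inside the supremum splits as
\begin{align*}
I(Y;Z^*) - I(Y;\hat Z) = \big(I(Y;Z^*) - V_Y\big) + \big(V_Y - V_{\hat Y}\big) + \big(V_{\hat Y} - I(\hat Y;\hat Z)\big) + \big(I(\hat Y;\hat Z) - I(Y;\hat Z)\big),
\end{align*}
where the first and third bracketed terms are each at most $\epsilon$ in absolute value. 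This isolates the two genuinely information-theoretic pieces: the gap $|V_Y - V_{\hat Y}|$ between the optimal values, and the gap $|I(\hat Y;\hat Z) - I(Y;\hat Z)|$ between the true and surrogate informativeness of one fixed representation. The factor $2$ in the statement will multiply both the continuity bound (applied to these two pieces) and the suboptimality $\epsilon$ (paid on the two remaining brackets).

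Both information-theoretic pieces reduce to one lemma: for every $Z'$ with $Y - X - Z'$, $\hat Y - X - Z'$, and $I(X;Z') \le I(X;Z)$,
\begin{align}
|I(Y;Z') - I(\hat Y;Z')| \le \dtv\, I(X;Z) + h_2(\dtv). \label{eq:plan-cont}
\end{align}
Granting \eqref{eq:plan-cont}, the term $|I(\hat Y;\hat Z) - I(Y;\hat Z)|$ is immediate with $Z' = \hat Z$, and $|V_Y - V_{\hat Y}|$ follows by a near-optimizer argument: a representation almost achieving $V_Y$ is automatically feasible for the $\hat Y$-problem because the two problems share the constraint $I(X;\cdot) \le I(X;Z)$, so \eqref{eq:plan-cont} transfers its value across, and symmetrically. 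Summing the four bounds gives $2\big(\dtv\, I(X;Z) + h_2(\dtv) + \epsilon\big)$.

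To prove \eqref{eq:plan-cont} I would use the hypothesis $\hat p_{Y|X} = p_{\hat Y|X}$ to build a \emph{maximal coupling} of $Y$ and $\hat Y$ conditioned on $X$, so that $\mathbb{P}(Y \ne \hat Y \mid X = x)$ equals the pointwise total variation $\tfrac12\sum_y |p(y|x) - \hat p(y|x)|$; by \eqref{ctv} the disagreement indicator $E = \mathbbm{1}[Y \ne \hat Y]$ then has $\mathbb{P}(E = 1) = \dtv$, and $(Y,\hat Y) - X - Z'$ remains a Markov chain. Starting from the identity $I(\hat Y;Z') - I(Y;Z') = I(\hat Y;Z'\mid Y) - I(Y;Z'\mid \hat Y)$ and expanding $I(\hat Y;Z'\mid Y) \le I(E;Z'\mid Y) + I(\hat Y;Z'\mid Y,E)$, the first summand is controlled by $I(E;Z'\mid Y) \le H(E) = h_2(\dtv)$, while on $\{E=0\}$ one has $Y = \hat Y$ so that contribution vanishes, leaving only the $\{E=1\}$ part weighted by $\mathbb{P}(E=1) = \dtv$. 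I would bound that part by the conditional data-processing inequality ($\hat Y - X - Z'$) and then relate the resulting conditional information back to $I(X;Z') \le I(X;Z)$ through concavity of $q \mapsto I_q(X;Z')$ in the input law; the reverse direction is symmetric.

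The hard part will be exactly this last step: extracting the \emph{multiplicative} factor $\dtv$ in front of $I(X;Z)$ rather than a bare $I(X;Z)$. The routine manipulations (chain rule, data processing, conditioning on $E$) comfortably produce a bound of the form $I(X;Z') + O(h_2(\dtv))$, but collapsing the $\{E=1\}$ contribution into the product $\dtv\, I(X;Z)$ requires the coupling to be exploited with care — tying the conditional mutual information on the disagreement event to the unconditional feasibility constraint, for instance through the concavity of mutual information in $q = p_{X\mid E}$ together with $I(X;Z') \le I(X;Z)$. Making the constants line up so that the prefactor is \emph{exactly} $\dtv$ and the entropy overhead \emph{exactly} $h_2(\dtv)$ is the delicate accounting on which the whole estimate rests.
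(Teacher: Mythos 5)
Your outer bookkeeping is sound and very likely mirrors how the paper's one-line citation is meant to be unpacked: the four-term telescope pays $\epsilon$ on the two suboptimality brackets and a continuity estimate on the other two, producing the overall factor of $2$. (One small caveat: to make a near-optimizer of the $Y$-problem feasible for the $\hat Y$-problem you need it to satisfy \emph{both} Markov chains $Y-X-Z'$ and $\hat Y-X-Z'$, not merely the shared rate constraint $I(X;Z')\le I(X;Z)$; this is fine if you restrict throughout to representations generated by channels $p_{Z'|X}$, but it should be said.) Note also that the paper itself offers no internal proof to compare against --- it imports the result wholesale from \cite{Foggo2019InformationLI} --- so the real question is whether your argument stands on its own.

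It does not, and the gap is exactly where you flag it, except that it is not ``delicate accounting'' but the entire mathematical content of the lemma. The continuity estimate $|I(Y;Z')-I(\hat Y;Z')|\le \dtv\, I(X;Z)+h_2(\dtv)$ \emph{is} the cited Lemma 1 of the reference; everything else in your write-up is routine. Moreover, the mechanism you propose for it provably falls short. With the maximal coupling and $E=\mathbbm{1}[Y\ne\hat Y]$, the chain rule and conditional data processing give
\begin{equation*}
I(\hat Y;Z')-I(Y;Z') \;\le\; h_2(\dtv) + \Pr(E=1)\, I(X;Z'\mid Y,E=1),
\end{equation*}
and the obstruction is that $I(X;Z'\mid Y,E=1)$ need not be bounded by $I(X;Z')$, since conditioning can increase mutual information. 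The concavity of $q\mapsto I_q(X;Z')$ in the input law --- your proposed rescue --- yields only $\Pr(E=1)\,I(X;Z'\mid Y,E=1)\le I(X;Z'\mid Y,E)\le I(X;Z')$, i.e.\ the product $\Pr(E=1)\cdot I(X;Z'\mid Y,E=1)$ is controlled by a \emph{bare} $I(X;Z')$, so the prefactor $\dtv$ evaporates at precisely the step you call hard. Substituting what your sketch actually delivers into your decomposition gives $2\left(I(X;Z)+h_2(\dtv)+\epsilon\right)$, not the claimed bound. Until the multiplicative step is supplied by a genuinely different argument (which is what the paper defers to the reference for), your write-up is a reduction of the lemma to its own key ingredient rather than a proof of it.
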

\begin{proof}
    This is a combination of Lemma 1 and Theorem 1 in reference \cite{Foggo2019InformationLI}.
\end{proof}

Given only the above, we cannot yet consider the term $I^{(2)}_{Loss, \epsilon}(Y \to \hat{Y}; Z)$ to be a measure of sample quality. To obtain such an interpretation, we will still need some additional work. We first note that Lemma \ref{lem:info_bound} holds for \emph{any} pair of random variables $(\hat{Y}, Z)$ satisfying the Markov chains $\hat{Y}-X-Z$, $Y-X-Z$. Of particular interest is the case when $Z, \hat{Y}$ satisfy the requirements in the following definition:
\begin{defn}\label{defn:algorithmic}
Let $\mc{Z}$ and $Z$ be as they were in definition \ref{defn:iloss1}. Let $\mc{S} \subset \mc{D}_{XY}$ be a training dataset. If $Z = \mc{F}(\mc{S})$ where ($\mc{F}$ may be stochastic), then we say that $Z$ is algorithmic on $\mc{S}$. Let $\tilde{Y}$ be another random variable on $\mc{Y}$ defined as a (possibly stochastic) function of $Z$. If we construct $\hat{Y}$ as a function on $X$ with the distribution $p_{\hat{Y}|X} = p_{\tilde{Y}|X}$ (jumping over $Z$), then we say that the pair $(\hat{Y}(\mc{S}), Z(S))$ is algorithmic on $\mc{S}$. 
\end{defn}

When $(\hat{Y}, Z)$ are algorithmic on a training sample $\mc{S}$ (say, through the function $\mc{F}$), then $I^{(2)}_{Loss, \epsilon}(Y \to \hat{Y}, Z)$ can be considered a measure of quality for the pair $(\mc{F}, \mc{S})$. All we need now is to extract the components of this quality measure relating to $\mc{S}$.  

In the case of deep neural networks, we believe that the relationship between $I(X;Z)$ and neural architecture is quite strong. On the other hand, reference \cite{Foggo2019InformationLI} proved a bound on $\dtv$ for randomly selected data in a way that did not depend on neural architecture. Thus we will consider the quality of the dataset to mostly interact with the term $\dtv$, while the quality of the `algorithm' (here, the neural architecture) will mostly interact with the term $I(X;Z)$. Thus we will focus our efforts on the study of $\dtv$. 

\subsection{Links to Classification Accuracy}
We wish to quickly show some analytical links between our quality measure, $I^{(2)}_{Loss, \epsilon}(Y \to \hat{Y}, Z)$, and classification accuracy. Our starting point is related to Fano's inequality \cite{cover2012elements}. To continue, we will need another definition:
\begin{defn}
    We say that $X$ is sufficient for the classification of $Y$ if $I(X;Y)=H(Y)$. 
\end{defn}
We then have:
\begin{lem}
Suppose $X$ is sufficient for the classification of $Y$. Let $(Z, \hat{Y})$ be algorithmic on a training sample $\mc{S}$. Let $\tilde{Y}$ be the random function of $Z$ defined as it was in definition \ref{defn:algorithmic}. Let $E$ be the random variable which takes the value $1$ when $\tilde{Y}=Y$ and $0$ otherwise. Let $p_e = \mathbb{P}(E=1)$. Let $log(|\mc{Y}|-1) \geq H(Y|E=1, Z) \geq \alpha > 0$ (this entails a multi-class problem) denote the level of remaining uncertainty in $Y$ once it is known that the estimator $\tilde{Y}$ is incorrect. Suppose $H(Y|Z)>0$ and let $H(Y|Z)>t>0$. Then $\exists r>0$ such that, if $I(X;Z) > r$:
\begin{equation}
    p_e \leq \frac{I^{(2)}_{Loss, \epsilon}(Y \to \hat{Y}, Z) + t}{\alpha}
\end{equation}

\end{lem}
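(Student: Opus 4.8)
The plan is to prove the bound in two independent stages and then chain them: a Fano-type stage establishing $\alpha\, p_e \le H(Y\mid Z)$, and an information-bottleneck stage establishing $H(Y\mid Z) \le I^{(2)}_{Loss,\epsilon}(Y\to\hat{Y},Z) + t$ under the hypothesis $I(X;Z) > r$. Combining the two immediately yields $p_e \le \big(I^{(2)}_{Loss,\epsilon}(Y\to\hat{Y},Z) + t\big)/\alpha$. Throughout I read $E$ as the \emph{error} indicator, i.e. $E=1$ exactly when $\tilde{Y}\neq Y$, which is the reading forced by the description of $H(Y\mid E=1,Z)$ as the residual uncertainty when $\tilde Y$ is incorrect.

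For the first stage I would expand $H(Y,E\mid Z)$ by the chain rule in two ways. Since $(Z,\hat Y)$ is algorithmic on $\mc S$, $\tilde Y$ is a function of $Z$ (Definition \ref{defn:algorithmic}), so $E=\mathbbm{1}[\tilde Y\neq Y]$ is a deterministic function of the pair $(Y,Z)$; hence $H(E\mid Y,Z)=0$ and $H(Y,E\mid Z)=H(Y\mid Z)$. Expanding the other way gives $H(Y,E\mid Z)=H(E\mid Z)+H(Y\mid E,Z)$, and I would split $H(Y\mid E,Z)$ over the two values of $E$. On the event $E=0$ we have $Y=\tilde Y$, which is a function of $Z$, so $H(Y\mid E=0,Z)=0$; thus $H(Y\mid E,Z)=p_e\,H(Y\mid E=1,Z)$. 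Dropping the nonnegative term $H(E\mid Z)$ and using $H(Y\mid E=1,Z)\ge\alpha$ then gives $H(Y\mid Z)=H(E\mid Z)+p_e\,H(Y\mid E=1,Z)\ge\alpha\,p_e$, which is the first stage.

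For the second stage, write $V^{*}$ for the optimal value of the constrained problem $\sup\{I(Y;Z^*): Y-X-Z^*,\ I(X;Z^*)\le I(X;Z)\}$ that defines the $\epsilon$-suboptimal set. The two facts I would use are: (i) $Z$ itself is feasible, and by the data-processing inequality along $Y-X-Z^*$ every feasible $Z^*$ satisfies $I(Y;Z^*)\le H(Y)$, so $I(Y;Z)\le V^{*}\le H(Y)$; and (ii) sufficiency, $I(X;Y)=H(Y)$, forces $H(Y\mid X)=0$, so $Y$ is (a.s.) a function of $X$ and embeds into $\mc Z$ as a representation $Z^{*}_0$ with $I(Y;Z^{*}_0)=H(Y)$ and $I(X;Z^{*}_0)=I(X;Y)=H(Y)$. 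Choosing $r$ so that $I(X;Z)>r$ makes $Z^{*}_0$ feasible (and more generally drives $V^{*}$ up to $H(Y)$, since $V^{*}$ is nondecreasing in the budget $I(X;Z)$ and bounded by $H(Y)$), so $V^{*}\ge H(Y)-t$. Because the loss is a supremum over the near-optimal set and $\sup I(Y;Z^*)=V^{*}$ over that set, I can lower-bound $I^{(2)}_{Loss,\epsilon}(Y\to\hat Y,Z)\ge \sup_{Z^*}\big(I(Y;Z^*)-I(Y;Z)\big)=V^{*}-I(Y;Z)\ge H(Y)-t-I(Y;Z)=H(Y\mid Z)-t$, using $|a|\ge a$ and $H(Y\mid Z)=H(Y)-I(Y;Z)$. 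Rearranging gives the second stage.

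The routine part is the first stage, which is standard Fano-style entropy bookkeeping. The main obstacle is the second stage: correctly extracting a lower bound from the somewhat implicit definition of $I^{(2)}_{Loss,\epsilon}$ (in particular handling the supremum over the $\epsilon$-suboptimal set so that the $\epsilon$ slack is absorbed), and pinning down the threshold $r$ — that is, showing sufficiency is exactly the ingredient that lets $V^{*}$ reach $H(Y)$ once the budget $I(X;Z)$ is large enough, with $t$ absorbing any residual gap $H(Y)-V^{*}$. The hypothesis $H(Y\mid Z)>t>0$ is what keeps the resulting bound non-vacuous.
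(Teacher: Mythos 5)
Your proposal is correct and follows essentially the same route as the paper: the first step of Fano's inequality giving $H(Y|Z) = H(E|Z) + p_e H(Y|E=1,Z) \ge \alpha p_e$, combined with the observation that sufficiency plus a large enough budget $I(X;Z)$ pushes the constrained supremum $V^*$ up to $H(Y)$, so that $I^{(2)}_{Loss,\epsilon} \ge H(Y|Z) - t$. If anything your version is slightly more careful than the paper's, which asserts an exact equality $I^{(2)}_{Loss,\epsilon} = H(Y|Z) - t(r)$ where only the one-sided inequality (the direction you prove) is needed.
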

\begin{proof}   
    The first step in the usual proof of Fano's inequality gives us:
    \begin{align}
        H(Y|Z) = H(E|Z) + p_e H(Y|E=1,Z)
    \end{align}
    
    Short-handing $I(X;Z) = r$, we have that the supremum in definition \ref{defn:iloss2} achieves $I(Y;Z^*) = H(Y) - t'(r)$ where $t'(r)>0$. If $X$ is sufficient, then $t(r) \to 0$ as $r \to \infty$. Then if $r$ is large enough such that $t'(r) < t$, we will have the following equality: ${I^{(2)}_{Loss, \epsilon}(Y \to \hat{Y}, Z) = |H(Y) - I(Y;Z) - t(r)|}$, which is equal to ${H(Y|Z) - t(r)}$. Thus we have:
\begin{align}
    p_e  &= \frac{I^{(2)}_{Loss, \epsilon}(Y \to \hat{Y}, Z) + t(r) - H(E|Z)}{H(Y|E=1, Z)} \notag \\ &\leq \frac{I^{(2)}_{Loss, \epsilon}(Y \to \hat{Y}, Z) + t}{\alpha}
\end{align}
\end{proof}

Given this Lemma, we see that studying $\dtv$ is not only useful for analyzing our training sample's ability to yield good representations, but for analyzing our classifier's probability of error as well.

\section{Facility Location Selection Reduces Information Losses }\label{active}
We will first show that selecting data according to a specific criterion - minimizing the facility location function - hedges risks in information losses. This section is mostly meant to act as a generalizable example of using the information loss framework to show that information losses are easy to deal with and lead to intuitive proofs of validity for a given method in terms of representational quality.

For a general training data selection strategy, we emphasize the goal of finding a naive `test' estimator which is somewhat natural to the strategy. We can then bound the conditional total variation of the `test' estimator relatively easily. This task will often reduce to plain analysis due to the simplicity of the conditional total variation term. Doing this will often give us insight into when a given strategy is useful.

For our example, we take the training data selection strategy which attempts to minimize the following function of the training dataset $S$, ${Z(S) = \mathbb{E}_{\mc{P}_X}\left[\|x - x_i \| \right]}$, where $x_i$ is the nearest neighbor of $x$ in $S$. This method is known as the \textit{facility location function selection method} \cite{drezner2001facility,sener2018active}, and it is a practical, intuitive, all-at-once data selection technique. The goal of this strategy is to pick data points such that, on average, every data point is geometrically close to some training point. 

To analyze this strategy, we will use a `test' estimator which takes into account local information near the training data. This yields the following Thoerem:
\begin{thm}\label{thm:fac}
Let $\mc{X}$ be a bounded subset of ${\mathbb{R}^d}$. Suppose that we have a Lipschitz-continuous, differentiable conditional probability function ${p(y|x): \mathbb{R}^{d} \to \mathbb{R}^{|\mc{Y}|}}$ with Lipschitz coefficient $L$ (maximized over each class variable). Let $\mc{S}$ denote a training dataset indexed by $i$. Let ${\mc{R}_i}$ be the set of points in $\mathbb{R}^d$ whose nearest neighbor in $\mc{S}$ is $x_i$ and consider the following `neighbors' estimator of $p(y|x)$: ${\hat{p}_{nn}(y|x) = p(y|x_i),~ x\in \mc{R}_i}$. Then $\hat{p}_{nn}$, and therefore any algorithm beating it (in terms of conditional total variation), has ${ \lim_{Z(S) \to 0} \frac{\dtv}{Z(S)} \leq \frac{L|\mc{Y}|}{2}}$.
\end{thm}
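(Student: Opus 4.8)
The plan is to bound the conditional total variation of the neighbors estimator directly and then divide by $Z(S)$. Since $\hat{p}_{nn}(y|x)=p(y|x_i)$ whenever $x$ lies in the cell $\mc{R}_i$ of its nearest training point $x_i$, substituting $\hat{p}=\hat{p}_{nn}$ into the definition of $\dtv$ in \eqref{ctv} gives
\[
\dtv=\mathbb{E}_{\mathbb{P}_X}\left[\frac{1}{2}\sum_y\left|p(y|x)-p(y|x_i)\right|\right],
\]
where $x_i=x_i(x)$ is the nearest-neighbor map. I would first note that the cell boundaries carry zero mass, so this map is well defined up to a null set and the expectation is unambiguous.

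Next I would apply Lipschitz continuity class by class. Because $x\mapsto p(y|x)$ is Lipschitz with constant at most $L$ for every $y\in\mc{Y}$, we have the pointwise bound $|p(y|x)-p(y|x_i)|\le L\|x-x_i\|$. Summing over the $|\mc{Y}|$ classes and keeping the factor $\tfrac{1}{2}$ yields $\tfrac{1}{2}\sum_y|p(y|x)-p(y|x_i)|\le\tfrac{L|\mc{Y}|}{2}\|x-x_i\|$ for every $x$. Taking expectation over $\mathbb{P}_X$ and recognizing $\mathbb{E}_{\mathbb{P}_X}[\|x-x_i\|]=Z(S)$ produces the uniform estimate $\dtv\le\tfrac{L|\mc{Y}|}{2}Z(S)$, valid for every selection $\mc{S}$ rather than only asymptotically. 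Dividing by $Z(S)>0$ and letting $Z(S)\to0$ then delivers the claimed inequality for $\hat{p}_{nn}$. The extension to ``any algorithm beating it'' is immediate by monotonicity: if $\hat{p}'$ has conditional total variation no larger than that of $\hat{p}_{nn}$ on the relevant datasets, its ratio to $Z(S)$ is dominated by that of $\hat{p}_{nn}$, so the same limit bound holds.

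The honest assessment is that there is little analytical difficulty here; the result is essentially an immediate consequence of Lipschitz continuity, and the uniform bound I obtain is in fact stronger than the stated limit. The only points that genuinely need care are the measurability and almost-everywhere definedness of the nearest-neighbor assignment and the monotonicity argument used for the ``beating it'' clause.

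If instead one wishes to make essential use of the differentiability hypothesis --- which the Lipschitz route above never needs --- the alternative is a first-order expansion $p(y|x)-p(y|x_i)=\nabla_x p(y|x_i)\cdot(x-x_i)+o(\|x-x_i\|)$, bounding $\|\nabla_x p(y|\cdot)\|\le L$ to recover the same constant $\tfrac{L|\mc{Y}|}{2}$ while using $Z(S)\to0$ to discard the remainder. In that formulation both the differentiability assumption and the limit play an active role, which is presumably why both appear in the hypotheses. I therefore expect the main ``obstacle'' to be a presentational choice between these two equivalent routes rather than any hard estimate.
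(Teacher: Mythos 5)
Your proof is correct, and your primary route is genuinely different from --- and cleaner than --- the one in the paper. The paper's proof expands $p(y|x) - \hat{p}_{nn}(y|x) = \nabla p(y|x_i)^T(x-x_i) + o(\|x-x_i\|)$ on each cell $\mc{R}_i$, extracts $\tfrac{L|\mc{Y}|}{2}Z(S)$ from the gradient term via Cauchy--Schwarz and $\|\nabla p(y|x_i)\|\le L$, and then must dispose of the accumulated remainder $\sum_i \int_{\mc{R}_i}|o(\|x-x_i\|)|\,d\mc{P}_X$ by a limiting argument: $Z(S)\to 0$ forces $\|x-\eta_S(x)\|\to 0$ almost everywhere (bounded convergence), so the ratio of the remainder to $Z(S)$ vanishes. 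This is precisely the ``alternative'' you sketch in your final paragraph, and it is why both differentiability and the limit appear in the hypotheses. Your main argument instead applies the Lipschitz bound $|p(y|x)-p(y|x_i)|\le L\|x-x_i\|$ pointwise and obtains the uniform, non-asymptotic inequality $\dtv \le \tfrac{L|\mc{Y}|}{2}Z(S)$, of which the stated limit is an immediate corollary; differentiability is never used. What your route buys is a strictly stronger conclusion with less machinery. What the paper's route buys is the template for Corollary \ref{corr:fac}, where the Lipschitz hypothesis is dropped and the per-cell gradient norms $\|\nabla p(y|x_i)\|$ survive as weights in the modified objective $\tilde{Z}(S)$ --- that refinement genuinely needs the Taylor-expansion bookkeeping, so the paper's longer proof is doing double duty. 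Your treatment of the ``any algorithm beating it'' clause by monotonicity matches the paper's implicit one; the only nit, which applies equally to the paper, is that for a competitor whose ratio need not converge the conclusion should be read as a bound on the $\limsup$.
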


\begin{proof}
We can linearly approximate $p(y|x)$ in each region $\mc{R}_i$. The absolute error between $p(y|x)$ and $\hat{p}_{nn}(y|x)$ in this region is given, for all $y \in \mc{Y}$, by:
\begin{equation}
    |p(y|x)- \hat{p}_{nn}(y|x)| =  \left|\nabla p(y|x_i)^T(x - x_i) + o(\|x-x_i \|)\right|
\end{equation}
We can then compute the expected conditional total variation between $p$ and $\hat{p}_{nn}$ by summing the contributions from each region and each class variable to obtain:
\begin{equation}
    \frac{1}{2}\sum_y\sum_i \int_{\mc{R}_i} \{|\nabla p(y|x_i)^T(x-x_i) + o(\|x-x_i\|)|\}d\mc{P}_X
\end{equation}
which, by the Cauchy-Schwartz inequality and triangle inequality can be upper bounded with the following expression:
\begin{align}
&&\frac{1}{2}\sum_i\sum_y\ \Big \{| \nabla p(y|x_i)\| &\int_{\mc{R}_i}  \|x-x_i\| d\mc{P}_X \notag \\ &&+ &\int_{\mc{R}_i}|o(\|x-x_i\|)|d\mc{P}_X \Big \}
\end{align}
which is itself upper bounded by:
\begin{equation}
    \frac{L|\mc{Y}|}{2} Z(S) + \frac{1}{2}|\mc{Y}|\sum_i\int_{\mc{R}_i}|o(\|x-x_i\|)|d\mc{P}_X
\end{equation}

Denote ${\eta_S: \mathbb{R}^d \to \mathbb{R}^d}$ as the function which takes $x$ to its nearest neighbor in $S$. We then have:
\begin{align}
    \frac{\delta_{TV}(p, \hat{p}_{nn})}{Z(S)} &\leq  \frac{L|\mc{Y}|}{2} +  \frac{|\mc{Y}|}{2}\frac{\int_{\mathbb{R}^d}|o(\|x-\eta_S(x)\|)|d\mc{P}_X}{\int_{\mathbb{R}^d}\|x-\eta_S(x)\| d\mc{P}_X} \notag \\
    &\leq \frac{L|\mc{Y}|}{2} + \frac{|\mc{Y}|}{2}\int_{\mathbb{R}^d}\frac{|o(\|x-\eta_S(x)\|)|}{\| x-\eta_S(x)\|}d\mc{P}_X 
\end{align}
(For the last inequality, let ${X=\frac{|o(\|x-\eta_S(x)\|)|}{\| x-\eta_S(x)\|}}$ and ${Y = \|x - \eta_S(x)\|}$ in the Cauchy-Schwartz inequality). 

Now, since ${\|x-\eta_S(x)\| >0}$, ${Z(S) \to 0}$ implies ${\|x-\eta_S(x)\| \to 0}$ on all but a set of measure zero (this follows from the bounded convergence theorem). Thus ${\frac{|o(\|x-\eta_S(x)\|)|}{\| x-\eta_S(x)\|} \to 0}$ almost surely, completing the proof.
\end{proof}

Put another way, Theorem \ref{thm:fac} states that $\dtv$ is bounded above by a function which asymptotically behaves as ${\frac{1}{2}L|\mc{Y}|Z(S)}$, which is linear in $Z(S)$. As such, $Z(S)$ acts linearly on $\dtv$, and therefore on our representational quality and probability of error.

If we wish to drop the Lipschitz condition in our hypothesis, then we can use the proof of Theorem \ref{thm:fac} to obtain the following corollary:
\begin{cor}\label{corr:fac}
Take all of the assumptions of Theorem \ref{thm:fac}, but remove the assumption that $p(y|x)$ is Lipschitz-continuous. Let ${\tilde{Z}(S) = \sum_y\sum_{\mc{R}_i} \|\nabla p(y|x_i)\| \cdot \mathbb{E}_{\mc{P}_X}\left[1_{x\in \mc{R}_i}\cdot \|x - x_i \| \right]}$. Then ${\lim_{\tilde{Z}(S) \to 0} \frac{\delta_{TV}(p, \hat{p})}{\tilde{Z}(S)} \leq \frac{1}{2}}$. 
\end{cor}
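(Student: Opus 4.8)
The plan is to reuse the proof of Theorem \ref{thm:fac} essentially verbatim up to the point where the Lipschitz hypothesis is invoked, and then simply to \emph{not} discard the gradient factors. Concretely, the same linear approximation of $p(y|x)$ on each region $\mc{R}_i$, followed by the Cauchy-Schwartz and triangle inequalities, gives
\begin{equation}
\delta_{TV}(p,\hat{p}_{nn}) \le \frac{1}{2}\sum_i\sum_y \|\nabla p(y|x_i)\|\int_{\mc{R}_i}\|x-x_i\| d\mc{P}_X + \frac{1}{2}\sum_i\sum_y\int_{\mc{R}_i}|o(\|x-x_i\|)| d\mc{P}_X .
\end{equation}
The only departure from Theorem \ref{thm:fac} is that I would not bound $\|\nabla p(y|x_i)\|$ by $L$; instead the first sum is, by definition, exactly $\frac{1}{2}\tilde{Z}(S)$, since $\int_{\mc{R}_i}\|x-x_i\| d\mc{P}_X = \mathbb{E}_{\mc{P}_X}[1_{x\in\mc{R}_i}\|x-x_i\|]$.

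Dividing through by $\tilde{Z}(S)$ then reduces the corollary to controlling a single remainder ratio, giving
\begin{equation}
\frac{\delta_{TV}(p,\hat{p}_{nn})}{\tilde{Z}(S)} \le \frac{1}{2} + \frac{1}{2}\,\frac{\sum_i\sum_y\int_{\mc{R}_i}|o(\|x-x_i\|)| d\mc{P}_X}{\tilde{Z}(S)} ,
\end{equation}
so it suffices to show the second term vanishes as $\tilde{Z}(S)\to 0$. This parallels the closing step of Theorem \ref{thm:fac}, where the analogous $o$-term ratio was driven to zero almost surely, via the bounded convergence theorem, once $\|x-\eta_S(x)\|\to 0$.

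To control the remainder ratio I would bound it region-by-region with the mediant inequality $\frac{\sum_k a_k}{\sum_k b_k}\le\max_k\frac{a_k}{b_k}$, so that the whole ratio is at most $\max_{i,y}\frac{\int_{\mc{R}_i}|o(\|x-x_i\|)| d\mc{P}_X}{\|\nabla p(y|x_i)\|\int_{\mc{R}_i}\|x-x_i\| d\mc{P}_X}$. For a fixed pair $(i,y)$ the defining property of the little-$o$ remainder forces $\frac{\int_{\mc{R}_i}|o| d\mc{P}_X}{\int_{\mc{R}_i}\|x-x_i\| d\mc{P}_X}\to 0$ as the region $\mc{R}_i$ shrinks (equivalently, as $\|x-\eta_S(x)\|\to 0$ on $\mc{R}_i$), again by the bounded convergence argument already used in Theorem \ref{thm:fac}.

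The main obstacle is precisely the gradient weighting in $\tilde{Z}(S)$: dividing each per-region remainder by $\|\nabla p(y|x_i)\|$ is harmless only where the gradient is bounded away from zero, whereas a region on which $\nabla p(y|x_i)$ is very small contributes negligibly to the denominator while its $o$-term need not be negligible. The delicate part of the write-up will therefore be to argue that, along the limit $\tilde{Z}(S)\to 0$ --- which forces the sample to become dense and the regions to shrink --- each remainder is dominated by its gradient-weighted distance, so that the maximum of the per-region ratios still tends to zero. Once this domination is established the second term drops out and the bound $\frac{1}{2}$ follows, completing the proof.
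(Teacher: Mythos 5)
Your first half is exactly the route the paper intends (the paper gives no explicit proof of the corollary, only the instruction to reuse the proof of Theorem \ref{thm:fac} without discarding the gradient factors), and your identification of the gradient-weighted sum with $\frac{1}{2}\tilde{Z}(S)$ is correct. The problem is the step you yourself flag, and it is not merely a ``delicate part of the write-up'': it is a genuine gap that your outline does not close. In Theorem \ref{thm:fac} the entire argument that the $o$-term vanishes rests on the implication $Z(S)\to 0 \Rightarrow \|x-\eta_S(x)\|\to 0$ almost everywhere, which holds because $Z(S)$ is the \emph{unweighted} expectation of $\|x-\eta_S(x)\|$. For $\tilde{Z}(S)$ that implication fails: the contribution of a cell $\mc{R}_i$ to $\tilde{Z}(S)$ is weighted by $\|\nabla p(y|x_i)\|$, so $\tilde{Z}(S)$ can be zero (or arbitrarily small) while some cells remain large --- a training point at which all class gradients vanish contributes nothing to $\tilde{Z}(S)$ no matter how large its Voronoi cell is, yet its remainder $\int_{\mc{R}_i}|o(\|x-x_i\|)|\,d\mc{P}_X$ need not be small. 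Your mediant-inequality reduction makes this visible rather than fixing it: the per-region ratio carries $\|\nabla p(y|x_i)\|$ in the denominator and is unbounded precisely where the gradient is small, so the maximum over cells need not tend to zero.

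To close the argument you would need an additional hypothesis restoring the implication that $\tilde{Z}(S)\to 0$ forces the cells to shrink --- for instance that $\min_i \max_y \|\nabla p(y|x_i)\|$ stays bounded below along the training sets considered (so that each cell's shrinkage is forced through at least one class), or that the limit is taken only along samples for which $Z(S)\to 0$ as well, in which case you must additionally argue that the unweighted remainder is $o(\tilde{Z}(S))$ and not merely $o(Z(S))$. As stated, the corollary itself is vulnerable to the degenerate configuration you describe, so the honest verdict is that your proposal faithfully reproduces the paper's intended argument and correctly diagnoses where it breaks, but it does not (and without extra hypotheses cannot) complete the final limit step.
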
 

This corollary may be particularly useful if we are capable of estimating the gradient $\|\nabla p(y|x_i)\|$ at each training data point, in which optimization of $\tilde{Z}(S)$ is possible and leads to an augmented facility location function selection method. If, however, we do take our Lipschitz assumption, and use the standard facility location function selection, then we may expect this method to be most effective when dealing with functions that vary quite rapidly (making $L$ large and therefore the marginal improvement obtained by decreasing the facility location value).

\section{Transductive Experimental Design Reduces Information Losses}

\subsection{Transductive Experimental Design}
Our second example focuses on a technique known as transductive experimental design \cite{yu2006active}. We would like to note that this example is probably more important than the previous, since it has a more immediate applicability. This is because the analysis significantly expands the scope of the method. While the original intention of Transductive Experimental Design is to reduce error variances in linear and kernel regression problems, our analysis shows that the same technique results in higher quality representations in general - and is thus applicable to a multitude of models. Furthermore, the field of research stemming from the original work on TED \cite{yu2008trnon, sindhwani2009uncertainty, cai2011manifold} can be extended to a more general setting via our analysis as well.

We will begin by quickly reviewing the Transductive Experimental Design method. We will assume that we have a set $\mc{D}$ consisting of $N$ unlabelled data points. We wish to label a subset of $\mc{D}$, denoted $S$, of these datapoints with cardinality ${M < N}$.

The method originates as an improvement over the techniques in the field of Optimal Design. In optimal design, we consider the regression task of estimating the vector $w$ in the equation ${y = w^Tx + \eta}$ via a regularized $L_2$ loss function $\mc{L}(w) = {\sum^M_{i=1} \|w^Tx_i - y_i \|^2 + \mu \|w\|^2_2}$, where $\eta$ are noise values distributed through $\eta \sim \mc{N}(\eta; 0, \sigma^2)$. When the optimum $\hat{w}$ is taken under this loss function, the estimation error $\hat{w} - w$ has a covariance matrix given by $C = \sigma^2\left(X^TX  + \mu I\right)^{-1}$ where $X$ is the design matrix corresponding to our training data, ${X = \begin{bmatrix} x_1 & x_2 & \cdots & x_M \end{bmatrix}^T}$. Different experimental design procedures exist to optimize different statistics of this covariance matrix by selecting training data points. For example, \emph{A-optimal} design attempts to minimize the trace of $C$, \emph{E-optimal} design attempts to minimize the largest eigenvalue of $C$, and \emph{D-optimal} design attempts to minimize its determinant. 

Transductive expermimental design attempts to take optimal design one step further and consider the covariance matrix of the vector containing the values $\hat{w}^Tx_i - w^Tx_i$ where the index $i$ runs over all points $x \in \mc{D}$ with \emph{un-obeserved} $y$ values. That is, transductive experimental design directly considers the covariance in generalization error. It can be shown that this covariance matrix is given by $VCV^T$ where $V$ is the design matrix corresponding to all of the unlabelled data, and $C$ is the covariance matrix obtained in standard experimental design. This is equivalent to the expression ${\frac{1}{\mu}\left(VV^T + VX^T(XX^T+\mu I)^{-1}XV^T \right)}$. TED then attempts to minimize the trace of this matrix by selecting rows of $V$ to place in rows of $X$. 

This can be kernelized as follows. Given a kernel $k(\cdot, \cdot)$, we can use the kernel trick to identify $VV^T$ as a matrix $K_{VV}$ with values $k(x_r, x_s)$ where $x_r$ and $x_s$ are unlabelled datapoints, identify $VX^T$ as a matrix $K_{VX}$ filled with values $k(x_i, x_j)$ where $x_i$ is unlabelled and $x_j$ is labelled, and identify $XX^T$ in the usual way with a matrix $K_{XX}$ filled with values $k(x_k, x_l)$ where both $x_k$ and $x_l$ are labelled. This leads to minimization of the following term:
\begin{equation} \label{obj:ted}
    Trace\left(K_{VV} -  K_{VX}\left(K_{XX}^{-1} + \mu I \right)^{-1} K_{VX}^T\right)
\end{equation}
Minimizing this expression is the goal of kernel Transductive Experimental Design. 

We will next bound the information loss term, $\dtv$, in a way that is naturally and directly dependent on this trace term. In doing so, we show that optimizing this term via transductive experimental design naturally leads to higher quality representations and lower classification errors despite such applications not being a part of the original scope of the method. For notational simplicity, we will denote the trace term (\ref{obj:ted}) as $TED(\mc{S}, \mc{D}, \mu)$, where, again, $S$ denotes the training dataset and $\mc{D}$ denotes the full dataset. We will also denote as $TED^{\frac{1}{2}}(\mc{S}, \mc{D}, \mu)$ the corresponding term when the matrix inside the trace operation is first subject to an element-wise square-root operation.

\subsection{Bounding Information Losses via the TED objective function}

\subsubsection{Notation and Basic Assumptions}
We assume that the feature space $\mc{X}$ is a finite dimensional Euclidean space. We assume that we have some continuous, symmetric, positive definite kernel function $k(\cdot, \cdot)$ with a corresponding Reproducing Kernel Hilbert space (RKHS) $\mc{H}$ \cite{berlinet2011reproducing}. We will denote the inner product on this RKHS as $\inp{\cdot}{\cdot}_{\mc{H}}$. Let $\mu$ be a regular Borel measure on $\mc{X}$. We will frequently refer the integral operators $T: L_{\mu}^2 \to \mc{H}$ and $T': L_{\mu}^2 \to L_{\mu}^2$, both given by: 
\begin{equation}
    f \mapsto \int k(x,y)f(y)d\mu(y)
\end{equation}
(but with differing co-domains).

$T'$ is a self-adjoint operator, and $T$ is adjoint to the embedding operator ${R: \mc{H} \to L_{\mu}^2}$ given by \cite{belkin2018approximation} ${(Rf)(x) = f(x)}$. That is, ${\braket{f, Tg}_{\mc{H}} = \braket{Rf, g}_{L_{\mu}^2}}$. We will also need an operator $R': L^2_{\mu} \to L^1_{\mu}$ which maps $f$ to itself, but under a different norm. By Mercer's theorem, $T'$ admits a countable set of eigenfunctions, $\{\phi_i\}_{i=1}$, which are $L^2_{\mu}$-orthonormal, $\mc{H}$-orthogonal and have positive decreasing eigenvalues $\{\lambda_i\}$ with $\underset{i \to \infty}{\lambda_i} = 0$.

We will also introduce the following matrix building notation: if $q$ is an index with domain $\{1,2,\cdots,Q\}$ and $p$ is an index with domain $\{1,2,\cdots,P\}$, then ${\begin{bmatrix}a_{p_l} \end{bmatrix}^{l} \triangleq \begin{bmatrix}a_1 &a_2 & \cdots &a_P \end{bmatrix}}$, ${\begin{bmatrix}a_{p_l} \end{bmatrix}_{l} \triangleq \begin{bmatrix}a_1 &a_2 & \cdots &a_P \end{bmatrix}^T}$, and ${\begin{bmatrix}[a_{p_lq_{l'}}]_{l} \end{bmatrix}^{l'} = \begin{bmatrix}[a_{p_lq_{l'}}]^{l'} \end{bmatrix}_{l} = \begin{bmatrix}a_{p_{l}q_{l'}} \end{bmatrix}_{l}^{l'}}$ where the final three matrices are all given by the matrix whose $ij^{th}$ element is $a_{p_iq_j}$, ${1 \leq i \leq P,~ 1 \leq j \leq Q}$. 

We denote three important index maps. The first is $\mc{I}: \{1,2, \cdots, M\}$ which indexes training data points via $x_{i_l}$. The second is $\mc{A}: \{1,2, \cdots, N\}$ which indexes all available data points via $x_{a_l}$. The third is $\mc{U}: \{1,2, \cdots, N-M\}$ which indexes all unlabelled points through $x_{u_l}$.  

Finally, the total variation is equivalent to $1$-norm on a subset of $L^1_{\mu}$ since $\mathbb{E}_{\mu}\left [ |p_{y|x} - \hat{p}_{y|x}| \right]$ \footnote{By writing the total variation in this form, we are implicitly assuming that our problem is 2-class. This is done for notational convenience. We will extend the theory to multiple classes immediately after finishing the 2-class case.}. We note that $p_{y|x}$ is an element of $L_{\mu}^1$, as ${\int \left|p_{y|x}(x) \right| d\mu(x) = p(Y=1) \leq 1 < \infty}$. Since $L^1_{\mu} \subseteq L^2_{\mu}$, we will primarily consider $p_{y|x}$ as an element of $L^2_{\mu}$ and use $R'p_{y|x}$ to view it as an element of $L^1_{\mu}$.

\subsubsection{Approximation Theory and Technical Lemmas}
To estimate $\dtv$ under a selected training set, we will begin with a definition that is useful for bounding the deviations of any function in $\mc{H}$ from its projection onto a given subspace.  
\begin{defn}
Let $V$ be a subspace of $\mc{H}$. Then the \emph{power function} on $V$, denoted $P_V$, is the function whose point-wise evaluation is given by:
\begin{equation}
    P_V(x) = \underset{\|f\|\leq 1}{\text{sup}}\left|f(x) - (\text{proj}_{V}f)(x)\right|
\end{equation}
where $\text{proj}_{V}$ is the orthogonal projection operator onto $V$. 
\end{defn}
We will deal with the particular finite dimensional subspace $V_{S} = \text{Span}\left(\{k(\cdot, x_{i_l}) \}^M_{l=1} \right)$. When $V$ is such a subspace, the projection operator $\text{proj}_{V}$ takes on the following well known result in approximation theory \cite{belkin2018approximation} whose proof we will not repeat here. 
\begin{lem}\label{lemma:fin_dim_proj}
    Let ${ K_{SS} \triangleq \begin{bmatrix}k(x_{i_l}, x_{i_{l'}}) \end{bmatrix}_{l}^{l'}}$, and for any $x \in \mc{X}$, let ${K_{xS}  = K_{Sx}^T \triangleq \begin{bmatrix}k(x, x_{i_{l}}) \end{bmatrix}^{l}}$. Then for all $f \in \mc{H}$, ${(\text{proj}_{V_S}f)(x) = K_{xS}K_{SS}^{-1}\begin{bmatrix} f(x_{i_l})\end{bmatrix}_{l}}$. 
\end{lem}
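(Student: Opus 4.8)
The plan is to prove this by the standard variational characterization of an orthogonal projection, specialized to a reproducing kernel Hilbert space. Since $V_S = \text{Span}\left(\{k(\cdot, x_{i_l})\}_{l=1}^M\right)$ is finite dimensional and hence closed in $\mc{H}$, the projection $\text{proj}_{V_S}f$ is the unique element $g \in V_S$ for which the residual is orthogonal to the subspace, i.e.\ $\inp{f - g}{k(\cdot, x_{i_l})}_{\mc{H}} = 0$ for every $l \in \{1, \dots, M\}$. So the first step I would take is to replace the abstract definition of $\text{proj}_{V_S}$ with these $M$ orthogonality conditions.

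Next I would parametrize an arbitrary member of $V_S$ as $g = \sum_{l'=1}^M \alpha_{l'}\, k(\cdot, x_{i_{l'}})$ for an unknown coefficient vector $\begin{bmatrix} \alpha_{l'} \end{bmatrix}_{l'}$, substitute this into the orthogonality conditions, and reduce the inner products using the reproducing property. Concretely, $\inp{f}{k(\cdot, x_{i_l})}_{\mc{H}} = f(x_{i_l})$ and $\inp{k(\cdot, x_{i_{l'}})}{k(\cdot, x_{i_l})}_{\mc{H}} = k(x_{i_l}, x_{i_{l'}})$, which is exactly the $(l, l')$ entry of $K_{SS}$. The $M$ conditions then collapse into the single linear system $K_{SS}\begin{bmatrix} \alpha_{l'} \end{bmatrix}_{l'} = \begin{bmatrix} f(x_{i_l}) \end{bmatrix}_{l}$. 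Solving gives $\begin{bmatrix} \alpha_{l'} \end{bmatrix}_{l'} = K_{SS}^{-1}\begin{bmatrix} f(x_{i_l}) \end{bmatrix}_{l}$, and evaluating $g$ at an arbitrary point $x$ once more via the reproducing property yields $g(x) = \sum_{l'} \alpha_{l'}\, k(x, x_{i_{l'}}) = K_{xS}\begin{bmatrix} \alpha_{l'} \end{bmatrix}_{l'}$. Combining the last two displays produces the claimed formula $(\text{proj}_{V_S}f)(x) = K_{xS}K_{SS}^{-1}\begin{bmatrix} f(x_{i_l}) \end{bmatrix}_{l}$.

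The only genuinely delicate point — and the one I would flag as the crux — is the invertibility of the Gram matrix $K_{SS}$, which is needed to solve for the coefficients and is implicit in writing $K_{SS}^{-1}$. This follows from the \emph{strict} positive definiteness of $k$ assumed in the setup together with the distinctness of the selected training points $\{x_{i_l}\}$; under these hypotheses $K_{SS}$ is symmetric positive definite and therefore nonsingular. Everything else is purely the reproducing property plus a finite-dimensional least-squares computation, so beyond careful bookkeeping with the matrix-building notation I expect no real obstruction. For completeness I would also remark that the solved $g$ genuinely lies in $V_S$ and satisfies all the orthogonality conditions by construction, so uniqueness of the projection certifies that this $g$ is indeed $\text{proj}_{V_S}f$ rather than merely a candidate.
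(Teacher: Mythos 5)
Your proof is correct, and it is the standard argument: characterize $\text{proj}_{V_S}f$ by orthogonality of the residual to each $k(\cdot,x_{i_l})$, reduce the inner products via the reproducing property to the linear system $K_{SS}\alpha = [f(x_{i_l})]_l$, and evaluate. Note that the paper does not actually prove this lemma at all --- it cites it as a well-known result from approximation theory (Belkin, 2018) and explicitly declines to repeat the proof --- so you are supplying exactly the derivation the reference would give. Your flag about invertibility of $K_{SS}$ is well taken: the paper assumes only a ``positive definite kernel,'' which in some conventions means merely that Gram matrices are positive semi-definite, so writing $K_{SS}^{-1}$ does implicitly require strict positive definiteness (or at least linear independence of the $k(\cdot,x_{i_l})$, i.e.\ distinct training points for a strictly positive definite kernel); one could alternatively replace the inverse by a pseudoinverse, under which the projection formula still holds. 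No gaps otherwise.
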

The next lemma from approximation theory gives us the desired bound on deviations of any function in $\mc{H}$ from its projection onto $V_S$.
\begin{lem}\label{lemma:pf_bound}
    Let $f \in \mc{H}$. Then the following inequality holds: ${|f(x) - (\text{proj}_{V_S}f)(x)| \leq  |P_{V_S}(x)|\|f\|_{\mc{H}}}$.
\end{lem}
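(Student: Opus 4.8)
The plan is to exploit the homogeneity of both sides of the claimed inequality under scaling of $f$, combined with the linearity of the orthogonal projection operator. First I would dispose of the trivial case $f = 0$, for which both sides vanish and the inequality holds with equality. For nonzero $f \in \mc{H}$, the idea is to normalize: set $g = f / \|f\|_{\mc{H}}$, so that $\|g\|_{\mc{H}} = 1$ and hence $g$ lies in the closed unit ball of $\mc{H}$ over which the supremum defining $P_{V_S}$ is taken.

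Next I would use that $\text{proj}_{V_S}$ is a bounded linear operator together with the fact that pointwise evaluation is a continuous linear functional on the RKHS, which gives $(\text{proj}_{V_S} g)(x) = \|f\|_{\mc{H}}^{-1} (\text{proj}_{V_S} f)(x)$ and $g(x) = \|f\|_{\mc{H}}^{-1} f(x)$ for every $x \in \mc{X}$. Combining these, I obtain the scaling identity $|g(x) - (\text{proj}_{V_S} g)(x)| = \|f\|_{\mc{H}}^{-1} |f(x) - (\text{proj}_{V_S} f)(x)|$. Because $g$ belongs to the unit ball, the left-hand side of this identity is a single admissible value of the quantity whose supremum defines the power function, so it is bounded above by $P_{V_S}(x)$. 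Multiplying the resulting inequality through by $\|f\|_{\mc{H}}$ yields the stated bound, where I note that $P_{V_S}(x) \geq 0$ as a supremum of absolute values, so that $P_{V_S}(x) = |P_{V_S}(x)|$ and the two forms of the bound coincide.

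I do not expect a substantial obstacle here, since the result is essentially a restatement of the definition of $P_{V_S}$ together with homogeneity; the entire content is that the deviation of a single function is dominated by the supremal deviation over the unit ball once $f$ is rescaled to that ball. The only points requiring care are the justifications that evaluation and projection commute with the scalar $\|f\|_{\mc{H}}^{-1}$ (which follow from linearity of $\text{proj}_{V_S}$ and from the reproducing property guaranteeing that each evaluation functional is linear and continuous) and that the normalized $g$ genuinely satisfies the constraint $\|g\|_{\mc{H}} \leq 1$, so that its deviation is legitimately one of the quantities controlled by the supremum.
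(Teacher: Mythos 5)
Your proof is correct and complete: the paper states this lemma without proof (citing it as a standard fact from approximation theory), and your normalization-plus-homogeneity argument is exactly the standard justification, with the $f=0$ case, the admissibility of $g=f/\|f\|_{\mc{H}}$ in the unit ball, and the linearity of projection and evaluation all handled properly.
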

If $p_{y|x}$ is in $\mc{H}$ then we can apply this immediately to our problem. But even if it is in $\mc{H}$, it may be high frequency, and so $\|f\|_{\mc{H}}$ may be large. To account for this, we will provide two bounds, one to cover the case when $p_{y|x} \in \mc{H}$ and $\|f\|_{\mc{H}}$ is small, and one to cover all other cases. To cover the other cases, we will decompose $p_{y|x}$ into a part in $\mc{H}$ and a part not in $\mc{H}$ by using the operator $T$ to write:
\begin{align}\label{eqn:error_decomp}
    &R'p_{y|x} - R'R\text{proj}_{V_S} Tp_{y|x}  \notag \\ 
    &\qquad= (R'-R'RT)p_{y|x} + R'RTp_{y|x} - R'R\text{proj}_{V_S} T p_{y|x}
\end{align}
Similarly, we have that:
\begin{align}
   &\|R'p_{y|x} - R'R\text{proj}_{V_S} Tp_{y|x}\|_{L^1_{\mu}} \leq \|(R'-R'RT)p_{y|x}\|_{L^1_{\mu}} & \notag  \\ & \qquad \qquad \qquad + \|R'RTp_{y|x} - R'R\text{proj}_{V_S}Tp_{y|x} \|_{L^1_{\mu}}
\end{align}
We will leave the study of the first term until the end of this subsection, and just denote it as $\epsilon_{\mc{H}}$ for now. Thus we will move to studying the second term, which will be handled, primarily, by lemma  \ref{lemma:pf_bound}. To make any progress, we will need to bound the RKHS norm of $Tp_{y|x}$, which we do in the following lemma: 
\begin{lem}\label{lemma:p_norm}
     ${\|Tp_{y|x}\|_{\mc{H}} \leq p(y=1){\sqrt{Tr(k)}}}$ where $Tr(k)$ is the trace of the operator $T$, i.e. $Tr(k) = \int k(x,x) d\mu(x) = \sum_i \lambda_i$.
\end{lem}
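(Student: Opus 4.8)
The plan is to realize $Tp_{y|x}$ as an $\mc{H}$-valued (Bochner) integral and to push the RKHS norm inside the integral. Writing out the definition of $T$, we have $Tp_{y|x} = \int k(\cdot, z)\, p_{y|x}(z)\, d\mu(z)$, which is the integral of the $\mc{H}$-valued map $z \mapsto k(\cdot, z)\, p_{y|x}(z)$ against the (probability) measure $\mu$. The first step is therefore to invoke the triangle inequality for such integrals, $\left\| \int k(\cdot, z)\, p_{y|x}(z)\, d\mu(z) \right\|_{\mc{H}} \leq \int \left\| k(\cdot, z) \right\|_{\mc{H}}\, p_{y|x}(z)\, d\mu(z)$, where I use that $p_{y|x}(z) \geq 0$ to keep the weight outside the norm.

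Next I would evaluate the integrand norm using the reproducing property: since $k(\cdot, z)$ is the representer of evaluation at $z$, we have $\| k(\cdot, z)\|_{\mc{H}}^2 = \inp{k(\cdot,z)}{k(\cdot,z)}_{\mc{H}} = k(z,z)$, hence $\|k(\cdot,z)\|_{\mc{H}} = \sqrt{k(z,z)}$. This reduces the estimate to the scalar integral $\|Tp_{y|x}\|_{\mc{H}} \leq \int \sqrt{k(z,z)}\, p_{y|x}(z)\, d\mu(z)$. I would then apply the Cauchy--Schwarz inequality in $L^2_\mu$ to the pair $\sqrt{k(z,z)}$ and $p_{y|x}(z)$, bounding this by $\left(\int k(z,z)\, d\mu(z)\right)^{1/2}\left(\int p_{y|x}(z)^2\, d\mu(z)\right)^{1/2}$. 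The first factor is exactly $\sqrt{Tr(k)}$: the identity $\int k(z,z)\, d\mu(z) = \sum_i \lambda_i$ follows from Mercer's theorem by writing $k(z,z) = \sum_i \lambda_i \phi_i(z)^2$ and integrating against the $L^2_\mu$-orthonormal eigenfunctions $\phi_i$.

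The main obstacle is the second Cauchy--Schwarz factor, namely controlling $\|p_{y|x}\|_{L^2_\mu}$ by the scalar $p(y=1)$. Here I would use that $p_{y|x}$ is a conditional probability, so $0 \leq p_{y|x}(z) \leq 1$ pointwise, which yields $\int p_{y|x}(z)^2\, d\mu(z) \leq \int p_{y|x}(z)\, d\mu(z) = p(y=1)$, the final equality being the normalization noted earlier. Assembling the pieces controls $\|Tp_{y|x}\|_{\mc{H}}$ by a product of $\sqrt{Tr(k)}$ and a power of $p(y=1)$. I expect the delicate point to be pinning down this power: the bound $\int p_{y|x}^2\, d\mu \le \int p_{y|x}\, d\mu$ is what makes the $p(y=1)$ factor appear, and one should verify carefully whether this step (as opposed to the cruder pointwise bound $p_{y|x} \le 1$) lands exactly on the claimed constant. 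As an independent cross-check I would redo the computation spectrally: expanding $p_{y|x} = \sum_i \hat{p}_i \phi_i$ gives $\|Tp_{y|x}\|_{\mc{H}}^2 = \sum_i \lambda_i \hat{p}_i^2$, and combining $\sum_i \hat{p}_i^2 = \|p_{y|x}\|_{L^2_\mu}^2 \le p(y=1)$ with $\lambda_i \le Tr(k)$ reproduces the same estimate, confirming the constant obtained by the direct route.
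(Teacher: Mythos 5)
Your route is genuinely different from the paper's. You realize $Tp_{y|x}$ as a Bochner integral of $z \mapsto k(\cdot,z)\,p_{y|x}(z)$, push the $\mc{H}$-norm inside, use $\|k(\cdot,z)\|_{\mc{H}}=\sqrt{k(z,z)}$, and apply Cauchy--Schwarz once in $L^2_\mu$; the paper instead expands $\|Tp_{y|x}\|_{\mc{H}}^2$ as the quadratic form $\int\!\!\int k(x,x')p_{y|x}(x)p_{y|x}(x')\,d(\mu\otimes\mu)$ and applies Cauchy--Schwarz to the double integral, together with the auxiliary estimate $\sqrt{\int\!\!\int k^2}\le Tr(k)$. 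Both roads, as well as your spectral cross-check $\|Tp_{y|x}\|_{\mc{H}}^2=\sum_i\lambda_i\inp{p_{y|x}}{\phi_i}^2$ (which is the cleanest of the three), arrive at the same intermediate quantity $Tr(k)\int p_{y|x}^2\,d\mu$, so up to that point your argument is a valid and somewhat more economical substitute.

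The delicate point you flagged is real, and it is exactly where your proof and the paper's part ways. Your step $\int p_{y|x}^2\,d\mu \le \int p_{y|x}\,d\mu = p(y=1)$ is correct and yields $\|Tp_{y|x}\|_{\mc{H}} \le \sqrt{p(y=1)}\,\sqrt{Tr(k)}$, i.e.\ the \emph{square root} of $p(y=1)$ rather than the first power claimed in the lemma. The paper reaches the first power via the step $\int p_{y|x}^2\,d\mu \le \left(\int p_{y|x}\,d\mu\right)^2$, but for a probability measure $\mu$ Jensen's inequality gives the reverse, $\left(\int p_{y|x}\,d\mu\right)^2 \le \int p_{y|x}^2\,d\mu$, with equality only when $p_{y|x}$ is $\mu$-a.e.\ constant; a rank-one kernel $k(x,x')=\phi(x)\phi(x')$ with $\|\phi\|_{L^2_\mu}=1$ and $p_{y|x}$ proportional to a nonconstant $\phi$ shows the stated constant cannot hold in general. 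So treat your $\sqrt{p(y=1)}$ as the provable version of the lemma. Since $\sqrt{p(y=1)}\le 1$, the downstream estimates survive, except that in the multi-class conversion the sum $\sum_c\sqrt{p(y=c)}$ is bounded by $\sqrt{|\mc{Y}|}$ rather than $1$, which inflates the first term of the bound in Theorem \ref{thm:delta_trace} by that factor.
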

\begin{proof}
\begin{align*}
    \|Tp_{y|x}\|_{\mc{H}}^2 &= \braket{Tp_{y|x} , Tp_{y|x}}_{\mc{H}} = \braket{RT\{p_{y|x}\} , p_{y|x}}_{L^2_{\mu}}  \notag \\
    &= \int \left(\int k(x,x')p_{y|x}(x') d\mu(x')\right) p_{y|x}(x)  d\mu(x) \notag \\
    &= \int \int k(x,x') p_{y|x}(x)p_{y|x}(x') d\mu^2 \notag \\  
    & \left(\text{where } d\mu^2 \triangleq d\mu(x) \otimes d\mu(x')\right) \notag \\
\end{align*}
\begin{align}
    &\leq \sqrt{\int\int k(x, x')^2 d\mu^2 \times \int\int  p^4_{y|x}(x, x') d\mu^2} \notag \\
    & \qquad \qquad \left(\text{where } p^4_{y|x}(x, x')  \triangleq p^2_{y|x}(x)p^2_{y|x}(x')\right) \notag \\
    &\leq \int k(x,x') d\mu(x) \times \sqrt{\int\int   p^4_{y|x}(x, x') d\mu^2} \notag \\
    &\leq \int k(x,x') d\mu(x) \times \int \int p_{y|x}(x)p_{y|x}(x')   d\mu^2 \notag \\
    &=Tr(k)\int p^2_{y|x}(x)d\mu(x) \notag \\
   &\leq Tr(k)\left(\int p_{y|x}(x) d\mu(x))\right)^2 = Tr(k)p^2(y)
\end{align}
\end{proof}
Then combining lemmas \ref{lemma:pf_bound} and \ref{lemma:p_norm}, we conclude that:
\begin{equation}\label{eqn:1norm}
    \left|Tp_{y|x}(x) - \text{proj}_{V_S}Tp_{y|x}(x)\right| \leq \left|P_{V_S}(x)\right|\sqrt{Tr(k)}p(y=1)
\end{equation}

Note that the expectation of the left hand side of (\ref{eqn:1norm}) is equal to the $L^1_{\mu}$ norm ${\|R'RTp_{y|x} - R'R\text{proj}_{V_S}Tp_{y|x} \|_{L^1_{\mu}}}$, the term under study. Since we are interested in \emph{empirical} estimates of this $L^1_{\mu}$ norm, we will now turn to manipulating the empirical expectation of  $|P_{V_S}(x)|$ [the only term that depends on $x$ in (\ref{eqn:1norm})] in a nice form. This nice form follows from one final cited lemma from approximation theory \cite{schaback1993comparison}:
\begin{lem}\label{lem:p_from_k}
    \begin{equation}\label{eqn:p_from_k}
        |P_{V_S}(x)| = \sqrt{K(x,x) - \bmat{k(x, x_{i_l})}^l K_{SS}^{-1}\bmat{k(x, x_{i_l})}_l}
    \end{equation}
\end{lem}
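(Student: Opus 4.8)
The plan is to recognize this as the standard RKHS power-function identity and to derive it from the reproducing property together with the self-adjointness and idempotency of the orthogonal projection onto $V_S$. The one fact that drives everything is that point evaluation is an inner product against the kernel section: for every $g \in \mc{H}$ and every $x$, $g(x) = \langle g, k(\cdot, x)\rangle_{\mc{H}}$.

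First I would rewrite the pointwise residual appearing inside the supremum that defines $P_{V_S}$. Using the reproducing property on both terms and the fact that $\text{proj}_{V_S}$ is self-adjoint, I get
\begin{equation}
f(x) - (\text{proj}_{V_S} f)(x) = \langle f, k(\cdot,x)\rangle_{\mc{H}} - \langle f, \text{proj}_{V_S} k(\cdot, x)\rangle_{\mc{H}} = \langle f, (I - \text{proj}_{V_S})k(\cdot,x)\rangle_{\mc{H}}.
\end{equation}
Taking the supremum over the closed unit ball $\|f\|_{\mc{H}} \leq 1$ and invoking Cauchy--Schwarz — with equality attained at the $f$ proportional to the residual — collapses the variational definition into a single norm,
\begin{equation}
|P_{V_S}(x)| = \|(I - \text{proj}_{V_S}) k(\cdot,x)\|_{\mc{H}}.
\end{equation}

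Next I would square this quantity and exploit that $I - \text{proj}_{V_S}$ is itself an orthogonal projection (self-adjoint and idempotent), so that $\|(I-\text{proj}_{V_S})k(\cdot,x)\|_{\mc{H}}^2 = \langle k(\cdot,x), (I-\text{proj}_{V_S})k(\cdot,x)\rangle_{\mc{H}}$. Expanding this and applying the reproducing property a second time gives
\begin{equation}
|P_{V_S}(x)|^2 = k(x,x) - (\text{proj}_{V_S} k(\cdot,x))(x).
\end{equation}
Finally I would substitute the explicit projection formula from Lemma \ref{lemma:fin_dim_proj} with the choice $f = k(\cdot, x)$. Since $k(\cdot,x)$ evaluated at a training node $x_{i_l}$ equals $k(x_{i_l}, x) = k(x, x_{i_l})$ by symmetry of the kernel, the evaluation vector $[f(x_{i_l})]_l$ is exactly $\bmat{k(x,x_{i_l})}_l$, whence $(\text{proj}_{V_S} k(\cdot,x))(x) = \bmat{k(x,x_{i_l})}^l K_{SS}^{-1}\bmat{k(x,x_{i_l})}_l$. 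Plugging in and taking the square root yields the claimed identity (reading $K(x,x)$ as $k(x,x)$).

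The main obstacle here is bookkeeping rather than anything conceptual: one must track carefully which argument of the kernel is being evaluated when passing between $k(\cdot,x)$ as an element of $\mc{H}$ and its values at the nodes, and must line up the row/column matrix-building conventions so that the quadratic form $\bmat{\cdot}^l K_{SS}^{-1}\bmat{\cdot}_l$ emerges symmetric. The only genuinely analytic point worth a sentence is that the supremum defining $P_{V_S}$ is actually \emph{attained} — which it is, by the equality case of Cauchy--Schwarz on the closed unit ball of $\mc{H}$ — so that the result holds as an exact identity rather than merely as an upper bound.
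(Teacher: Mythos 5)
Your proof is correct: the paper offers no proof of this lemma, merely citing it from the approximation-theory literature (\citealp{schaback1993comparison}), and your derivation via the reproducing property, the self-adjointness and idempotency of $I - \text{proj}_{V_S}$, and the explicit projection formula of Lemma~\ref{lemma:fin_dim_proj} applied to $f = k(\cdot,x)$ is exactly the classical argument that reference would supply. You also correctly handle the one notational wrinkle, reading $K(x,x)$ in the statement as $k(x,x)$, and you rightly observe that the supremum is attained by the normalized residual, so the identity is exact rather than an inequality.
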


Calculating the empirical expectation of this then immediately lends itself to the TED objective function. The result is as follows:
\begin{lem}\label{lemma:main}
    Let $\hat{\mu}_{\mc{D}}$ be the empirical measure over $\mc{D}$. Let ${ K = \begin{bmatrix} k(x_{j_l}, x_{j_{l'}}) \end{bmatrix}_{l}^{l'}}$. Then:
    \begin{equation}\label{eqn:p_schur}
        \mathbb{E}_{\hat{\mu}_{\mc{D}}}\left[|P_{V_s}(x)| \right] = \frac{1}{N}\text{Trace}\left(\sqrt{K/K_{SS}}\right)
    \end{equation}
    where the notation $X/A$ refers to the Schur complement of $X$ with respect to $A$, and $\sqrt{\cdot}$ refers to taking the element-wise square root of the matrix in its argument.
\end{lem}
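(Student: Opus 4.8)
The plan is to evaluate the empirical expectation directly from the closed form of the power function given in Lemma \ref{lem:p_from_k}, and then recognize the resulting finite sum as the trace of an element-wise square root of a Schur complement. Since $\hat{\mu}_{\mc{D}}$ places mass $1/N$ on each of the $N$ points of $\mc{D}$, I would first write
\begin{equation*}
    \mathbb{E}_{\hat{\mu}_{\mc{D}}}\left[|P_{V_S}(x)| \right] = \frac{1}{N}\sum_{l=1}^{N}|P_{V_S}(x_{a_l})|,
\end{equation*}
where $\{x_{a_l}\}$ enumerates $\mc{D}$ via the index map $\mc{A}$, and then substitute the formula from Lemma \ref{lem:p_from_k} into each summand.

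The central observation driving the proof is that the power function vanishes at every training point. For $x = x_{i_m} \in \mc{S}$, the vector $\bmat{k(x_{i_m}, x_{i_l})}_l$ is exactly the $m$-th column of $K_{SS}$, so $K_{SS}^{-1}\bmat{k(x_{i_m}, x_{i_l})}_l = e_m$ and the quadratic form in Lemma \ref{lem:p_from_k} returns $k(x_{i_m}, x_{i_m})$, exactly cancelling the leading diagonal term. Hence $|P_{V_S}(x_{i_m})| = 0$, and the sum over all of $\mc{D}$ collapses to a sum over only the $N-M$ unlabelled points $\{x_{u_l}\}$ indexed by $\mc{U}$.

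It then remains to identify this surviving sum with the trace. Writing $K$ in block form with the training block $K_{SS}$ as the leading principal block, the Schur complement $K/K_{SS} = K_{\mc{U}\mc{U}} - K_{\mc{U}\mc{S}}K_{SS}^{-1}K_{\mc{S}\mc{U}}$ is an $(N-M)\times(N-M)$ matrix whose $l$-th diagonal entry is precisely $k(x_{u_l}, x_{u_l}) - \bmat{k(x_{u_l}, x_{i_{l'}})}^{l'}K_{SS}^{-1}\bmat{k(x_{u_l}, x_{i_{l'}})}_{l'}$, which by Lemma \ref{lem:p_from_k} equals $|P_{V_S}(x_{u_l})|^2$ (here I use symmetry of $k$ to identify the rows of $K_{\mc{U}\mc{S}}$ with the columns of $K_{\mc{S}\mc{U}}$). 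Since the trace depends only on the diagonal, applying the element-wise square root sends each diagonal entry to $|P_{V_S}(x_{u_l})|$, giving $\text{Trace}\bigl(\sqrt{K/K_{SS}}\bigr) = \sum_{l} |P_{V_S}(x_{u_l})|$. Combining this with the collapse of the sum onto the unlabelled points yields the stated identity.

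The step I expect to demand the most care is the bookkeeping: making the block partition of $K$ consistent with the index maps $\mc{I}$, $\mc{U}$, and $\mc{A}$ so that $K_{SS}$ genuinely occupies the leading principal block and the Schur complement ranges over exactly the unlabelled indices, and then matching row/column orderings so that the diagonal of $K/K_{SS}$ lines up term-by-term with the squared power function. The rest is a direct substitution, and notably no limiting or measure-theoretic argument is needed for this lemma.
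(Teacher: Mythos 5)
Your proof is correct and follows essentially the same route as the paper's: both substitute the closed form of $|P_{V_S}|$ from Lemma \ref{lem:p_from_k}, observe that the power function vanishes at training points so the sum collapses to the unlabelled indices, and identify the surviving terms with the diagonal of the element-wise square root of the Schur complement $K/K_{SS}$. Your treatment is in fact slightly more explicit than the paper's about the vanishing at training points and the block-partition bookkeeping, but the argument is the same.
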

\begin{proof}
    From lemma \ref{lem:p_from_k}, we have $|P_{V_S}(x)| = \sqrt{K(x,x) - \bmat{k(x, x_{i_l})}^l K_{SS}^{-1}\bmat{k(x, x_{i_l})}_l}$. Note that, if $x$ is a training data point, $\bmat{k(x, x_{i_l})}_l$ is the $l^{th}$ column of $K_{SS}$, so $K_{SS}^{-1}\bmat{k(x, x_{i_l})}_l = e_{l}$ where $e_{l}$ is the standard unit vector with $1$ at position $l$ and zeroes elsewhere. Then $\bmat{k(x, x_{i_l})}^l K_{SS}^{-1}\bmat{k(x, x_{i_l})}_l$ is equal to $k(x, x)$. Thus, in summing (\ref{eqn:p_from_k}) over the dataset $\mc{D}$, we only need to include terms corresponding to unlabelled data.
    
    Recall that the Schur complement is given by:
    \begin{equation}
        K/K_{SS} = \bmat{k(x_{u_l}, x_{u_{l'}})}^l_{l'} - \bmat{k(x_{u_{l'}}, x_{i_l})}^l_{l'} K_{SS}^{-1}\bmat{k(x_{u_{l'}}, x_{i_l})}_l^{l'}
    \end{equation}
    from which we can see that the $p^{th}$ diagonal element of $\sqrt{K / K_{SS}}$ is equal to $|P_{V_S}(x_{u_p})|$. Summing over the unlabelled points is then equivalent to taking the trace of this matrix. Dividing by the size of $\mc{D}$ ($N$) completes the result. 
\end{proof}

Finally, we will combine all of these lemmas to obtain the following bound:
\begin{equation}\label{eqn:all_together}
    \delta_{\hat{p}_{y=c}}^{emp} \leq \frac{p(y=c)}{N}\sqrt{Tr(k)}Trace \left(\sqrt{K/K_{SS}}\right) + \epsilon_{\mc{H}}
\end{equation}

Before we wrap up this section, we will need to quickly return to the study of $\epsilon_{\mc{H}}$. We have the following lemma:
\begin{lem}\label{lem:eh}
    Let $\{\zeta_j\}$ be a (countable) orthonormal basis for $Null(T)$ (this exists by the separability of $L^2_{\mu}$). Then:
    \begin{equation}
        \epsilon_{\mc{H}}  \leq \sqrt{\sum_i \inp{p_{y|x}}{\phi_i}^2(1-\lambda_i)^2 + \sum_j \inp{p_{y|x}}{\zeta_j}^2}
    \end{equation}
\end{lem}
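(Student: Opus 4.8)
The plan is to recognize that the claimed right-hand side is precisely the $L^2_\mu$ norm of $(I - RT)p_{y|x}$, so the lemma reduces to two ingredients: an $L^1$-to-$L^2$ comparison, and a spectral computation of that $L^2$ norm. First I would rewrite the quantity under study as $\epsilon_{\mc{H}} = \|(R'-R'RT)p_{y|x}\|_{L^1_\mu} = \|R'(I-RT)p_{y|x}\|_{L^1_\mu}$. Since $R'$ is the identity on functions and merely changes the ambient norm from $L^2_\mu$ to $L^1_\mu$, this equals $\|(I-RT)p_{y|x}\|_{L^1_\mu}$. Because $\mu$ is a (finite, normalized) measure, Cauchy--Schwarz gives $\|g\|_{L^1_\mu}\leq \|g\|_{L^2_\mu}$ for every $g$, so it suffices to bound the $L^2_\mu$ norm of $(I-RT)p_{y|x}$.

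The key step is to identify the composition $RT$ with the self-adjoint operator $T'$. By their definitions, both send $f$ to the function $\int k(\cdot,y)f(y)\,d\mu(y)$: the operator $T$ produces this function as an element of $\mc{H}$, and $R$ re-embeds it into $L^2_\mu$, so $RT = T'$ as operators on $L^2_\mu$. This is what lets me diagonalize $(I-RT) = (I-T')$ against the Mercer eigensystem. Since $k$ is continuous and $\mu$ finite, $T'$ is compact and self-adjoint (Hilbert--Schmidt); by the spectral theorem together with the separability of $L^2_\mu$, the eigenfunctions $\{\phi_i\}$ with positive eigenvalues $\{\lambda_i\}$, augmented by the orthonormal basis $\{\zeta_j\}$ of $Null(T') = Null(T)$, form a complete orthonormal basis of $L^2_\mu$. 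Expanding $p_{y|x} = \sum_i \inp{p_{y|x}}{\phi_i}\phi_i + \sum_j \inp{p_{y|x}}{\zeta_j}\zeta_j$ and applying $(I-T')$ eigenspace-wise gives $(I-T')\phi_i = (1-\lambda_i)\phi_i$ and $(I-T')\zeta_j = \zeta_j$, since each $\zeta_j$ lies in the null space.

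Finally, Parseval's identity applied to this orthonormal system yields $\|(I-T')p_{y|x}\|_{L^2_\mu}^2 = \sum_i \inp{p_{y|x}}{\phi_i}^2(1-\lambda_i)^2 + \sum_j \inp{p_{y|x}}{\zeta_j}^2$, which is exactly the square of the asserted bound; chaining this with the $L^1\leq L^2$ inequality from the first paragraph closes the argument. The hard part will be establishing $RT = T'$ cleanly and justifying that $\{\phi_i\}\cup\{\zeta_j\}$ is a complete orthonormal basis of $L^2_\mu$ — in particular handling the null space of $T$ explicitly, which is the reason the statement introduces the auxiliary basis $\{\zeta_j\}$. A secondary, more routine subtlety is the passage from the $L^1_\mu$ to the $L^2_\mu$ norm, which requires $\mu$ to be finite (naturally, a probability measure).
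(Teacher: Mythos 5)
Your proposal is correct and follows essentially the same route as the paper's proof: expand $p_{y|x}$ in the orthonormal basis $\{\phi_i\}\cup\{\zeta_j\}$, observe that $RT$ acts as multiplication by $\lambda_i$ on $\phi_i$ and annihilates the $\zeta_j$, then conclude via Parseval/Pythagoras and the bound of the $L^1_\mu$ norm by the $L^2_\mu$ norm. Your explicit identification of $RT$ with $T'$ and your remark that the $L^1\leq L^2$ step needs $\mu$ finite are slightly more careful framings of steps the paper performs implicitly, not a different argument.
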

\begin{proof}
    Since $\{\phi_i\} \cup \{\zeta_j\}$ form a basis of $L^2_{\mu}$, and ${p_{y|x} \in L^2_{\mu}}$, we can write:
    \begin{equation}
        p_{y|x} = \sum_i \inp{p_{y|x}}{\phi_i}\phi_i + \sum_j \inp{p_{y|x}}{\zeta_j} \zeta_j
    \end{equation}
    Then:
    \begin{align}
        p_{y|x} - RTp_{y|x} &= \sum_i \inp{p_{y|x}}{\phi_i}\phi_i + \sum_j \inp{p_{y|x}}{\zeta_j} \zeta_j \notag \\
        &\qquad - \sum_l \inp{p_{y|x}}{\phi_l} \int k(.,y)  \phi_l(\tau) d\mu(\tau) \notag \\
        &= \sum_i \inp{p_{y|x}}{\phi_i}\phi_i + \sum_j \inp{p_{y|x}}{\zeta_j} \zeta_j \notag \\
        & \qquad \qquad - \sum_{i} \lambda_i \phi_i \inp{p_{y|x}}{\phi_i} \notag \\
        &= \sum_i (1-\lambda_i)\inp{p_{y|x}}{\phi_i}\phi_i + \sum_j \inp{p_{y|x}}{\zeta_j} \zeta_j
    \end{align}
    The lemma then follows by noting that $L^1_{\mu}$ norms are bounded by $L^2_{\mu}$ norms and then applying Pythagorean's theorem. 
    
\end{proof}

\subsubsection{Converting to multiple classes}
Converting this bound to multiple classes is as simple as removing the ${p(y=c)}$ term and dividing by $2$. This is because the total variation over multiple classes is given by half the sum of each $L_1$-norm. Thus the ${p(y=c)}$ terms in the bound of each $L_1$-norm sum together to $1$, and we are just left with the remaining $\frac{1}{2}$. The term $\epsilon_{\mc{H}}$ can be similarly estimated by performing the estimation for each class variable, and then summing and dividing by $2$.

We thus have the following theorem:
\begin{thm}\label{thm:delta_trace}
Let $\delta_{\hat{p}}^{emp}$ be the empirical estimate of $\delta_{\hat{p}}$. Then the estimator $R'R\text{proj}_{V_S}Tp(y|x)$, which requires only the training data, obtains an empirical estimate $\delta_{\hat{p}}^{emp}$ bounded via:
\begin{align}\label{eqn:delta_TED}
    &\delta_{\hat{p}}^{emp} \leq \frac{TED^{\frac{1}{2}}(\mc{S}, \mc{D}, 0)}{2N}\sqrt{\sum_i \lambda_i} \notag \\ 
    &+ \frac{1}{2}\sum_c \sqrt{\sum_i \inp{p_{y=c|x}}{\phi_i}^2(1-\lambda_i)^2 + \sum_j \inp{p_{y=c|x}}{\zeta_j}^2}
\end{align}
This further implies that any algorithm providing a better estimate of $p_{y|x}$ (in terms of conditional total variation) than $R'R\text{proj}_{V_S}Tp(y|x)$ satisfies the same bound. 
\end{thm}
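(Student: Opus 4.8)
The plan is to treat this theorem as the final assembly of the preceding lemmas, since essentially all of the analytic content is already in place. First I would fix a single class $c$ and recover the per-class bound (\ref{eqn:all_together}) directly. Starting from the pointwise inequality (\ref{eqn:1norm}), which itself combines Lemmas \ref{lemma:pf_bound} and \ref{lemma:p_norm}, I would take the empirical expectation $\mathbb{E}_{\hat{\mu}_{\mc{D}}}$ of both sides. The left-hand side becomes the empirical $L^1_{\mu}$ norm $\|R'RTp_{y|x} - R'R\text{proj}_{V_S}Tp_{y|x}\|$, the term flagged for study after the decomposition (\ref{eqn:error_decomp}), while the only $x$-dependent factor on the right is $|P_{V_S}(x)|$, whose empirical expectation is supplied by Lemma \ref{lemma:main}. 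Re-attaching the residual $\epsilon_{\mc{H}}$ from (\ref{eqn:error_decomp}) then reproduces (\ref{eqn:all_together}) exactly.

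Next I would perform the two identifications that turn this generic bound into a statement about the TED objective. Using Lemma \ref{lemma:p_norm}, I replace $\sqrt{Tr(k)}$ by $\sqrt{\sum_i \lambda_i}$. More substantively, I would identify $\mathrm{Trace}(\sqrt{K/K_{SS}})$ with $TED^{\frac{1}{2}}(\mc{S}, \mc{D}, 0)$: the Schur complement computed inside the proof of Lemma \ref{lemma:main} is precisely the unlabelled-block matrix $K_{VV} - K_{VX}K_{XX}^{-1}K_{VX}^{T}$, which is the matrix appearing in the TED objective (\ref{obj:ted}) specialized to $\mu = 0$, with $K_{SS}$ playing the role of $K_{XX}$ and the unlabelled index map $\mc{U}$ playing the role of the $V$ block. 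Since $TED^{\frac{1}{2}}$ is defined as the trace taken after an element-wise square root and the trace only reads the diagonal, this identification is immediate once the index sets are aligned.

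With the per-class bound written in TED form, I would carry out the multiclass conversion described just before the theorem statement. The total variation is half the sum of the per-class $L^1_{\mu}$ norms, so summing (\ref{eqn:all_together}) over $c$ and multiplying by $\frac{1}{2}$ produces a leading term $\frac{1}{2}\cdot\frac{\sqrt{\sum_i \lambda_i}}{N}\,TED^{\frac{1}{2}}(\mc{S}, \mc{D}, 0)\sum_c p(y=c)$; since $\sum_c p(y=c) = 1$, the probability weights collapse and leave exactly the factor $\frac{1}{2N}$ in (\ref{eqn:delta_TED}). Bounding each residual $\epsilon_{\mc{H}}$ term by its per-class version from Lemma \ref{lem:eh} then supplies the second summand, completing the inequality. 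The closing ``any algorithm beating it'' clause is then a one-line consequence: the right-hand side is an upper bound on the conditional total variation realized by the explicit estimator $R'R\text{proj}_{V_S}Tp(y|x)$, so any estimator achieving a smaller conditional total variation satisfies the same inequality a fortiori.

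The hard part will not be a deep argument but the careful bookkeeping in the Schur-complement identification. I must verify that the $\mc{U}$ indexing in Lemma \ref{lemma:main} matches the $V$ block of (\ref{obj:ted}), that the full Gram matrix $K$ partitions cleanly into training and unlabelled blocks so that $K/K_{SS}$ is the right object, and above all that the $\mu \to 0$ specialization of (\ref{obj:ted}) genuinely reduces to $K_{VV} - K_{VX}K_{XX}^{-1}K_{VX}^{T}$ rather than to a regularized variant. I would double-check the form of the regularized inverse and the placement of the element-wise square root before declaring the two trace expressions equal; everything else is routine substitution.
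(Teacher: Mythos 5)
Your proposal follows the paper's own proof in essentially every respect: the paper proves Theorem \ref{thm:delta_trace} as a direct assembly of equation (\ref{eqn:all_together}), Lemma \ref{lem:eh}, and the multiclass-conversion discussion, which is precisely your plan, and the Schur-complement/TED$^{\frac{1}{2}}$ bookkeeping you flag as the delicate step is already carried out inside Lemma \ref{lemma:main}. The only item you omit is the side-claim that the estimator $R'R\text{proj}_{V_S}Tp(y|x)$ requires only the training data, which the paper dispatches in one sentence via Lemma \ref{lemma:fin_dim_proj} together with the observation that the sampled empirical estimate of $p(y=c|x)$ being approximated takes only the values $0$ and $1$.
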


\begin{proof}
    This is a direct culmination of the lemmas and discussions of this section. The bound provided is a combination of equation (\ref{eqn:all_together}), lemma \ref{lem:eh}, and the class-combining discussion of the preceding paragraph. That the estimator $R'R\text{proj}_{V_S}Tp(y|x)$ depends only on the training data points follows from lemma \ref{lemma:fin_dim_proj} and the fact that the sampled empirical estimate of $p(y=c|x)$ that we are approximating is either $0$ or $1$ for all $x \in \mc{X}, c \in \mc{Y}$.
\end{proof}

If each $p_{y=c|x} \in \mc{H}$ then we can drop the second term in the second sum to obtain:
\begin{align}
    &\delta_{\hat{p}}^{emp} \leq \frac{TED^{\frac{1}{2}}(\mc{S}, \mc{D}, 0)}{2N}\sqrt{\sum_i \lambda_i} \notag \\
    & \qquad + \frac{1}{2}\sum_c \sqrt{\sum_i \inp{p_{y=c|x}}{\phi_i}^2(1-\lambda_i)^2}
\end{align}

And we can perform the same analysis when $p_{y=c|x} \in \mc{H}$ without the decomposition step [equation (\ref{eqn:error_decomp})] to obtain the following corollary which more tightly couples the empirical $\delta_{\hat{p}}^{emp}$ to the TED objective function at the cost of a (possibly large) multiplicative term:
\begin{cor}
Take the hypotheis of of Theorem \ref{thm:delta_trace}. Assume further that each ${p_{y=c|x} \in \mc{H}}$. Then:
\begin{equation}
    \delta_{\hat{p}}^{emp} \leq \frac{TED^{\frac{1}{2}}(\mc{S}, \mc{D}, 0)}{2N}\sqrt{\sum_i \lambda_i} \sum_c \sqrt{\sum_i \frac{\inp{p_{y=c|x}}{\phi_i}^2}{\lambda_i^2}}
\end{equation}
where the final multiplicative factor can be recognized as $\sum_c \|p_{y=c|x}\|_{\mc{H}}$.
\end{cor}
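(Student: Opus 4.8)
The plan is to re-run the chain of estimates that produced equation~(\ref{eqn:all_together}) and Theorem~\ref{thm:delta_trace}, but to use the extra hypothesis $p_{y=c|x}\in\mc{H}$ to bypass the splitting in~(\ref{eqn:error_decomp}). Because each $p_{y=c|x}$ already lives in the RKHS, there is no need to first smooth it with $T$ and then separately control the out-of-$\mc{H}$ remainder; instead I would take the estimator $R'R\,\text{proj}_{V_S}p_{y=c|x}$ (projecting the genuine RKHS element $p_{y=c|x}$, rather than $Tp_{y=c|x}$) and apply Lemma~\ref{lemma:pf_bound} to $f=p_{y=c|x}$ directly. This is exactly what ``the same analysis without the decomposition step'' should mean, and it is the reason the additive residual $\epsilon_{\mc{H}}$ drops out: the remainder $(R'-R'RT)p_{y=c|x}$ never enters.

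Concretely, Lemma~\ref{lemma:pf_bound} applied to $p_{y=c|x}$ gives the pointwise estimate $|p_{y=c|x}(x)-(\text{proj}_{V_S}p_{y=c|x})(x)|\le |P_{V_S}(x)|\,\|p_{y=c|x}\|_{\mc{H}}$. I would then take the empirical expectation over $\mc{D}$ of both sides. The RKHS-norm factor is constant in $x$, so the only quantity to be averaged is $|P_{V_S}(x)|$, and Lemma~\ref{lemma:main} identifies its empirical mean with $\tfrac{1}{N}\text{Trace}\!\left(\sqrt{K/K_{SS}}\right)=\tfrac{1}{N}TED^{\frac{1}{2}}(\mc{S},\mc{D},0)$. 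This produces, per class, an empirical $L^1_\mu$ error bounded by $\tfrac{1}{N}TED^{\frac{1}{2}}(\mc{S},\mc{D},0)\,\|p_{y=c|x}\|_{\mc{H}}$, now tied to the TED objective with no additive slack.

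To assemble the statement I would then do two things. First, identify the RKHS norm through Mercer's theorem: expanding $p_{y=c|x}=\sum_i\inp{p_{y=c|x}}{\phi_i}\phi_i$ in the $L^2_\mu$-orthonormal eigenbasis and using the $\mc{H}$-orthogonality of the $\phi_i$ expresses $\|p_{y=c|x}\|_{\mc{H}}$ as the displayed factor $\sqrt{\sum_i \inp{p_{y=c|x}}{\phi_i}^2/\lambda_i^2}$, which is the sense in which it ``can be recognized as $\|p_{y=c|x}\|_{\mc{H}}$.'' Second, combine classes exactly as in the ``Converting to multiple classes'' paragraph: the multiclass total variation is half the sum of the per-class $L^1_\mu$ norms, so I multiply by $\tfrac{1}{2}$ and sum over $c$. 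Carrying along the $\sqrt{Tr(k)}=\sqrt{\sum_i\lambda_i}$ factor inherited from the norm bookkeeping then reproduces the stated inequality.

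I expect the main obstacle to be precisely this bookkeeping of the norm factor rather than any hard analytic step. The routine chaining of Lemmas~\ref{lemma:pf_bound} and~\ref{lemma:main} is already carried out for Theorem~\ref{thm:delta_trace}; what needs care is (i) verifying that discarding $\epsilon_{\mc{H}}$ is legitimate, which it is precisely because one projects $p_{y=c|x}$ itself and not $Tp_{y=c|x}$, and (ii) matching the eigen-expansion of $\|p_{y=c|x}\|_{\mc{H}}$ against the exact prefactor $\tfrac{1}{2N}\sqrt{\sum_i\lambda_i}$ appearing in the statement. In particular, the clean estimate of step two yields $\tfrac{1}{2N}TED^{\frac{1}{2}}(\mc{S},\mc{D},0)\sum_c\|p_{y=c|x}\|_{\mc{H}}$, and a single Cauchy--Schwarz inequality shows $\sqrt{\sum_i\lambda_i}\,\sqrt{\sum_i\inp{p_{y=c|x}}{\phi_i}^2/\lambda_i^2}\ge\|p_{y=c|x}\|_{\mc{H}}$, so the displayed product is a valid (if possibly loose) upper bound for the norm, which is what makes the corollary's inequality hold as written.
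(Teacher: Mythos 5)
Your argument is correct, and it reaches the corollary by a slightly different (and in fact sharper) route than the one the paper gestures at. The paper's ``same analysis without the decomposition step'' almost certainly means: write $p_{y=c|x} = Tg_c$ with $g_c = T^{-1}p_{y=c|x}$, so that $RTg_c = p_{y=c|x}$ exactly and the remainder $\epsilon_{\mc{H}}$ vanishes, then rerun Lemma \ref{lemma:pf_bound} on $Tg_c$ and replace Lemma \ref{lemma:p_norm}'s bound $p(y=c)\sqrt{Tr(k)}$ by $\|Tg_c\|_{\mc{H}} \leq \sqrt{Tr(k)}\,\|g_c\|_{L^2_{\mu}}$ with $\|g_c\|_{L^2_{\mu}} = \sqrt{\sum_i \inp{p_{y=c|x}}{\phi_i}^2/\lambda_i^2}$ --- this is how both the $\sqrt{\sum_i\lambda_i}$ prefactor and the $\lambda_i^{-2}$ weights arise in the stated bound. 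You instead project $p_{y=c|x}$ itself and obtain the tighter intermediate bound $\frac{1}{2N}TED^{\frac{1}{2}}(\mc{S},\mc{D},0)\sum_c\|p_{y=c|x}\|_{\mc{H}}$, then recover the displayed inequality via Cauchy--Schwarz. Both steps of yours check out: with $c_i = \inp{p_{y=c|x}}{\phi_i}$ one has $\bigl(\sum_i |c_i|/\sqrt{\lambda_i}\bigr)^2 \leq \bigl(\sum_i\lambda_i\bigr)\bigl(\sum_i c_i^2/\lambda_i^2\bigr)$ and $\sum_i c_i^2/\lambda_i \leq \bigl(\sum_i |c_i|/\sqrt{\lambda_i}\bigr)^2$, so the stated product dominates $\|p_{y=c|x}\|_{\mc{H}}$. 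Your version also quietly corrects a misstatement in the corollary: the ``final multiplicative factor'' $\sqrt{\sum_i c_i^2/\lambda_i^2}$ is not $\|p_{y=c|x}\|_{\mc{H}}$ (which is $\sqrt{\sum_i c_i^2/\lambda_i}$) but rather $\|T^{-1}p_{y=c|x}\|_{L^2_{\mu}}$, and it can be infinite even when $p_{y=c|x}\in\mc{H}$; your direct-projection bound stays finite under the stated hypothesis and is therefore strictly preferable, with the corollary following a fortiori.
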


\subsection{Optimization and Notes}\label{subsec:opt}
\subsubsection{A Caveat on Optimization}
A few methods of optimizing the regular TED objective function exist \cite{yu2006active, yu2008trnon}. However, in deriving a bound for $\dtv$, we've found instead a relationship to the objective function which we've denoted $TED^{\frac{1}{2}}$. Unfortunately, $\left(\text{TED}^{\frac{1}{2}}\right)^2 \neq \text{TED}$. Instead, we have:
\begin{align}
    &\left(TED^{\frac{1}{2}}\right)^2 = TED + \notag \\
    &\sum_{ij} \sqrt{(K_{x_i, x_i} - K_{x_i, S}{K_{SS}^{-1}}K_{S, x_i})(K_{x_j, x_j} - K_{x_j, S}{K_{SS}^{-1}}K_{S, x_j})}
\end{align}
and so, in optimizing the standard TED objective, we neglect these latter cross-terms. Nonetheless, optimizing the standard TED objective does still reduce the first term to a minimum, and so will still result in better quality representations. 

\subsubsection{A Caveat on Kernel Feature Dimensions}
The following property is true of TED independently of our analysis: if the feature space of the kernel, $r$, is smaller than the number of desired training data points $M$, then $TED(\mc{D}, \mc{S}, 0)$ is zero for all $\mc{S}$. To see this, note that in this case, we would be able to write $\bmat{k(x_{a_l}, x_{a_{l'}})}^l_{l'} = VV^T$ where $V\in \mathbb{R}^{N \times r}$ is a tall rank $r$ matrix. We then have the following equivalent form of the TED objective that was derived in reference \cite{yu2006active}:
\begin{equation}\label{eqn:ted2}
    \underset{B}{\min~}\underset{A}{\min} \sum_k \|V_{:,k} - ABV \|_2^2
\end{equation}
where $B$ is a diagonal $N\times N$ matrix constrained to have $M$ unit values and zeros elsewhere. But then the column space of $V$ has dimension $r$, so every column $V_{:, k}$ is a linear combination of just $r$ columns. Thus there exists a $B$ such that the objective function is zero (with $A$ corresponding to that linear combination).

This is particularly problematic for linear kernels when the original design matrix $X$ has low rank. And while most kernels have feature spaces of countably infinite dimension, it may still be problematic if we approximate our kernel through a finite kernel matrix and perform svd to directly obtain $VV^T$. It is important, then, to use at least $M$ columns of $V$ for our new feature space. 

\subsubsection{An Augmentation for Scale Invariance}
Also, we note that the training dataset returned from minimizing TED$(\cdot, \cdot, 0)$ does not depend on the scale of the kernel. To see this, we look again at (\ref{eqn:ted2}). Letting $c >0$, we then see that:
\begin{align}
    \underset{B}{\min~}\underset{A}{\min} \sum_k \|c^{\frac{1}{2}} V_{:,k} - AB c^{\frac{1}{2}}V \|_2^2 \notag \\
    = c~ \underset{B}{\min~}\underset{A}{\min} \sum_k \|X_{:,k} - ABV \|_2^2
\end{align}
Thus scaling the dataset by $c$ does not effect the chosen training samples. However, the non-greedy optimization strategy derived in reference \cite{yu2008trnon} does not share this scale invariance. But this can be fixed fairly easily by passing the (known) scaling parameter $c$ to the $A$ update of the method. This yields a new update given by:
\begin{equation}\label{eqn:aupdate}
    A \leftarrow VV^T(c \beta^{-1} + VV^T)^{-1}
\end{equation}
while the $\beta$ update remains unchanged:
\begin{equation}\label{eqn:bupdate}
    \beta_{j,j} \leftarrow \sqrt{\frac{1}{\gamma} \|A_{:,j}\|_2^2}
\end{equation}
where $\gamma$ is a regularization parameter controlling the sparsity of $\beta$. Iterating over this new $A$ update and the old $\beta$ update will give us a scale invariant ranking matrix $\beta$ in which larger values of $\beta_{j,j}$ indicate higher importance of the data-point indexed by $j$ according to the TED objective. Note that the passed scale $c$ is that of the \emph{kernel}, not of $V$.

Scale invariance allows us to tighten our bound via taking infinums over the scale parameter.
\begin{cor}
Take all assumptions and definitions of Theorem \ref{thm:delta_trace}. Let $TED^{\frac{1}{2}}(\mc{S}, \mc{D}, 0, c), c>0$ denote the value of $TED^{\frac{1}{2}}(\mc{S}, \mc{D}, 0)$ when using a scaled kernel function $ck$. Let $\epsilon_{\mc{H}}$ denote the value of $\epsilon_{\mc{H}}(c)$ when using the scaled kernel. Let $\mc{S}^*$ be the training dataset returned from a scale invariant minimization of the standard TED objective function. Then:
\begin{equation}
    \delta_{\hat{p}}^{emp} \leq \underset{c > 0}{\inf ~}\frac{TED^{\frac{1}{2}}(\mc{S}^*, \mc{D}, 0, c) { \sqrt{Tr(ck)}}}{2N}  + \frac{1}{2}\sum_c \epsilon^c_{\mc{H}}(c)
\end{equation}
where
\begin{equation}
    \epsilon_{\mc{H}}^c \triangleq \sqrt{\sum_i \inp{p_{y=c|x}}{\phi_i}^2(1-\lambda_i)^2 + \sum_j \inp{p_{y=c|x}}{\zeta_j}^2}
\end{equation}

\end{cor}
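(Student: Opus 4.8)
The plan is to apply Theorem \ref{thm:delta_trace} separately for each scaled kernel $ck$ and then to combine the resulting one-parameter family of bounds by taking an infimum over $c$. The observation that makes this legitimate was established just above, around equation (\ref{eqn:ted2}): the minimizer of the standard TED objective is scale invariant, so the selected training set $\mc{S}^*$ is \emph{identical} for every choice of $c>0$. This is precisely what renders the family of bounds comparable — they all refer to estimators constructed from the same training data $\mc{S}^*$, differing only in the kernel scale used to build the projection.

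First I would fix $c>0$ and track how each ingredient of the bound (\ref{eqn:delta_TED}) transforms when $k$ is replaced by $ck$. Scaling the kernel scales the integral operator $T'$, so its eigenvalues transform multiplicatively, $\lambda_i \mapsto c\lambda_i$, while the $L^2_{\mu}$-orthonormal eigenfunctions $\phi_i$ (and the $Null(T)$ basis $\{\zeta_j\}$) are unchanged; hence the $L^2_{\mu}$ inner products $\inp{p_{y=c|x}}{\phi_i}$ and $\inp{p_{y=c|x}}{\zeta_j}$ are also unchanged. Consequently $\sqrt{\sum_i \lambda_i}=\sqrt{Tr(k)}$ becomes $\sqrt{Tr(ck)}$, the objective value becomes $TED^{\frac{1}{2}}(\mc{S}^*, \mc{D}, 0, c)$, and the residual from Lemma \ref{lem:eh} becomes exactly the quantity $\epsilon_{\mc{H}}^c(c)$ defined in the statement. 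Substituting these into (\ref{eqn:delta_TED}) yields, for every $c>0$, the scale-$c$ bound
\begin{equation*}
    \delta_{\hat{p}}^{emp} \leq \frac{TED^{\frac{1}{2}}(\mc{S}^*, \mc{D}, 0, c)\,\sqrt{Tr(ck)}}{2N} + \frac{1}{2}\sum_c \epsilon_{\mc{H}}^c(c).
\end{equation*}

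The remaining step is to justify passing to the infimum on the right-hand side. Here I would invoke the closing implication of Theorem \ref{thm:delta_trace}: for each $c$, the reference estimator $R'R\text{proj}_{V_{S^*}}Tp(y|x)$ built from the scaled kernel depends only on $\mc{S}^*$, and any estimator $\hat{p}$ beating it in conditional total variation inherits the scale-$c$ bound. Because all of these reference estimators are built from the \emph{same} training set $\mc{S}^*$, a single sufficiently good $\hat{p}$ — in particular the conditional-total-variation-optimal one — satisfies the scale-$c$ bound simultaneously for every $c>0$. Since the left-hand side $\delta_{\hat{p}}^{emp}$ is a fixed number that does not depend on $c$, it is bounded by the infimum over $c>0$ of the right-hand sides, which is exactly the claimed inequality.

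The main obstacle is bookkeeping rather than computation. I must confirm carefully that replacing $k$ by $ck$ leaves the chosen $\mc{S}^*$ fixed (the scale-invariance argument around (\ref{eqn:ted2}) and the augmented update (\ref{eqn:aupdate})) and that the spectral decomposition transforms in the clean multiplicative manner claimed, so that the $Null(T)$ contribution to $\epsilon_{\mc{H}}^c$ is not mistakenly rescaled. A secondary subtlety is the notational overloading of $c$ as both the scale parameter and the class index appearing in $\sum_c \epsilon_{\mc{H}}^c(c)$; I would keep these two roles strictly separate throughout the argument even though the statement writes them with the same symbol.
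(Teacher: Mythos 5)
Your proposal is correct and follows the same route the paper intends (the paper gives no explicit proof, but its preceding discussion of scale invariance of the TED minimizer is exactly the justification you supply): apply Theorem \ref{thm:delta_trace} with the scaled kernel $ck$ for each $c>0$, note that $\mc{S}^*$ is fixed across all $c$, and take the infimum since the left side is $c$-independent. Your bookkeeping of how the spectrum transforms ($\lambda_i \mapsto c\lambda_i$ with eigenfunctions and the $Null(T)$ component unchanged, so the residual becomes $(1-c\lambda_i)^2$ inside $\epsilon_{\mc{H}}^c(c)$) is the correct reading of the paper's slightly overloaded notation.
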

This is particularly useful because of the terms in $\epsilon_{\mc{H}}$ of the form $\inp{p_{y|x}}{\phi_i}(1-\lambda_i)$. For indices with large $\inp{p_{y|x}}{\phi_i}$,  we will likely get a much better bound if our kernel has $\lambda_{i} \approx 1$. With scale invariance, this can be made true without explicitly scaling our kernel, which is good because we typically won't know the coefficients $\inp{p_{y|x}}{\phi_i}$ (at least not without a small amount of initial labels).  

\section{Experiments}

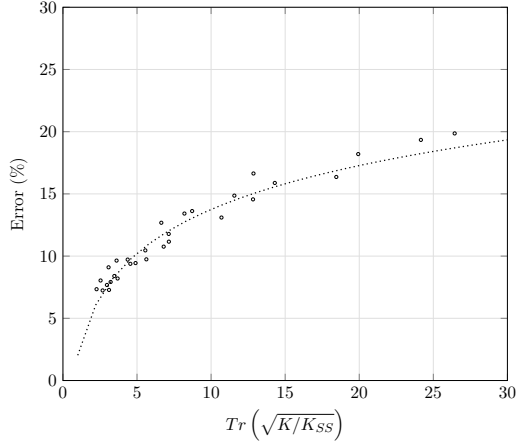
\begin{figure}
    \centering
    \begin{subfloat}{}
 \begin{tikzpicture}[scale=0.8]
    \begin{axis}[xlabel=$Tr\left(\sqrt{K/K_{SS}}\right)$,
	xmin=0,
    xmax=30,
    ymin=0,
    ymax=30,
	ylabel= Error (\%),
	grid=both,
	minor grid style={gray!25},
	major grid style={gray!25},
	width=0.75\linewidth,
    scatter/classes={a={mark=o,draw=black, mark size=1pt}}]]
    
\addplot[scatter,only marks,%
    scatter src=explicit symbolic]%
    table[meta=label] {
    x y label
    26.44	19.86 a
    24.16	19.34 a
    19.93	18.2 a
    18.45	16.36 a
    14.3	15.88 a
    12.86	16.64 a 
    11.57	14.86 a
    8.72	13.62 a
    12.83	14.56 a
    8.2	    13.42 a
    10.7	13.1 a
    6.64	12.68 a
    7.14	11.78 a
    7.15	11.16 a
    5.56	10.46 a
    6.8	    10.76 a
    5.63	9.74 a
    4.55	9.38 a
    4.9	    9.44 a
    3.62	9.64 a
    4.37	9.72 a
    3.08	9.1 a
    3.69	8.2 a
    3.47	8.4 a
    2.54	8.04 a
    2.97	7.68 a
    2.27	7.34 a
    3.22	7.92 a
    2.68	7.24 a
    3.11	7.28 a
    };
\addplot+[name path=B,black, no marks, domain=1:30, dotted, thick] {5.09*ln(x) + 2.03};
    
    \end{axis}
    \end{tikzpicture} 
\end{subfloat}
    \caption{$TED^{\frac{1}{2}}$ objective value vs classification accuracy on MNIST with the cosine kernel.}
    \label{fig:correspondence}
\end{figure}

\begin{figure}[t]
    \centering
    \includegraphics[width=0.49\textwidth, height=8cm]{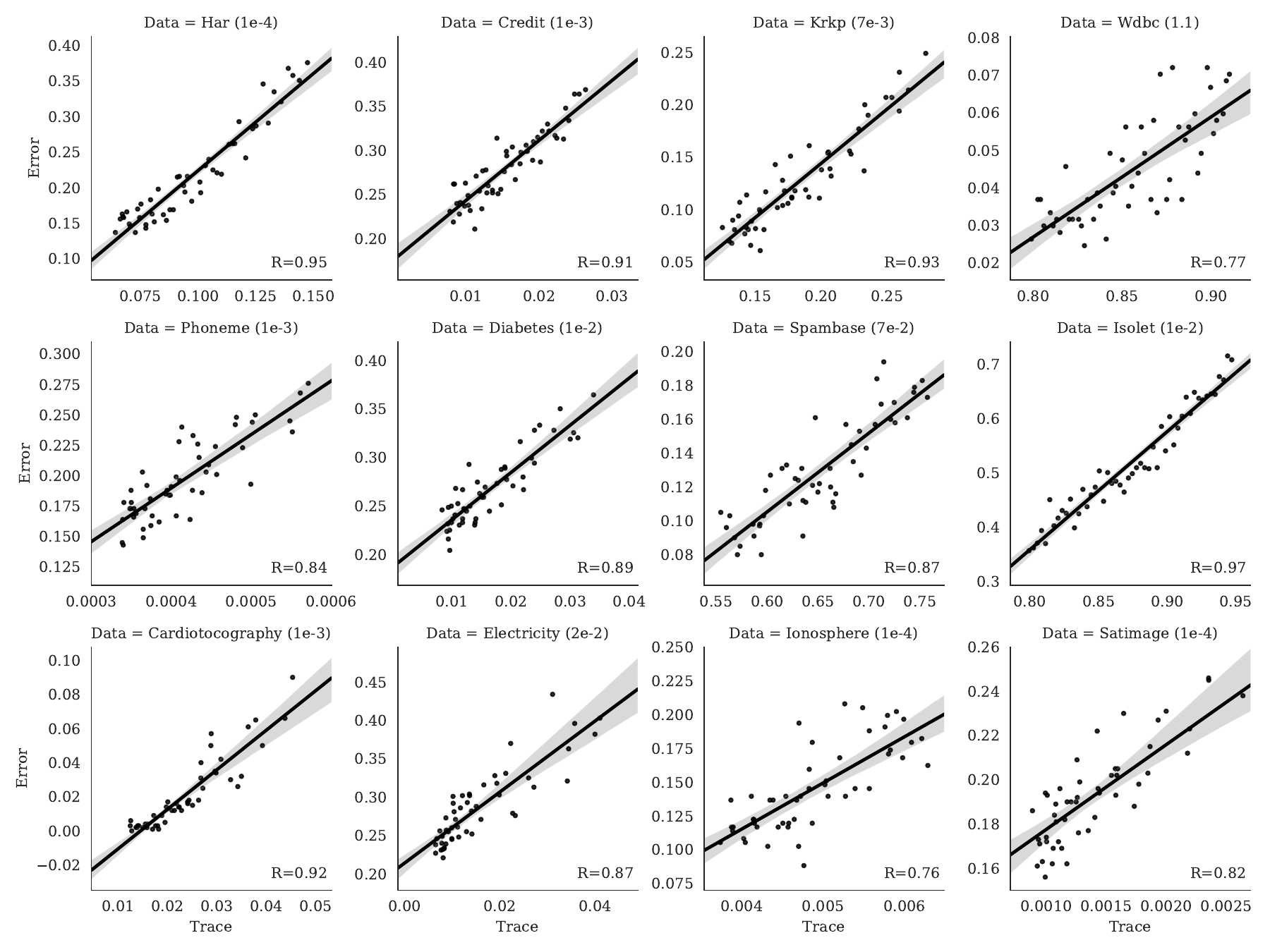}
    \caption{Classification errors against the data quality measure with a fixed training data size for varying datasets. Dataset and corresponding rbf $\gamma$ value are indicated at the top of each plot.}
    \label{fig:scatter_plots}
\end{figure}
\begin{figure}[t]
\centering
    \includegraphics[width = 0.49\textwidth, height=4cm]{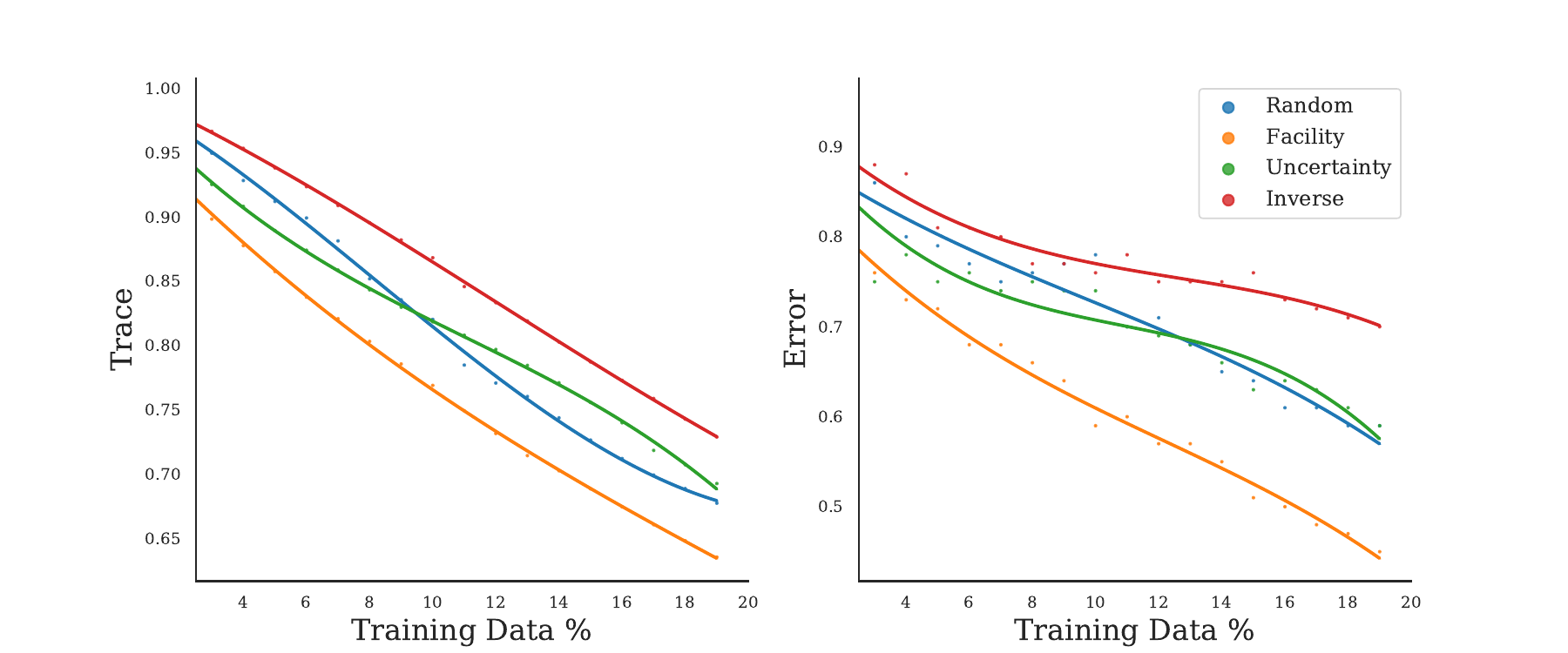}
    \caption{MNIST data quality measure and classification error against training data size for several methods of training data selection.}
    \label{fig:tr_sweep_cmpr}
\end{figure}

We first tested if our bound had any generalizable meaning to classification accuracies. To do so, we first took the MNIST with the cosine kernel and swept over training data sizes from $5\%$ to $90\%$ and plotted the trace term against the classification accuracy of a fully connected feed forward neural network with $1000$ hidden units in Figure \ref{fig:correspondence}. We see a strong correspondence between these two variables. However, to ensure that this correspondence is meaningful, we also need to ensure that it exists when the training data size is controlled - since both terms, in isolation, decrease as $M$ grows. 

To do this, we took several samples of training data over a variety of datasets provided by OpenML \cite{OpenML2013} and trained them on a fully connected feed-forward neural network with $1000$ hidden units - plotting the trace term against the classification error in each case. In each case, we took $20\%$ of the full dataset as training data. Similar correspondences occur at any percentage of training data. We have provided the resulting scatter plots in Figure \ref{fig:scatter_plots} which show that, in this controlled scenario, the two variables are still correlated. Each point in these plots corresponds to a different ratio of selected points to random points in that fixed-size training dataset, where selection is done with an `inverse' heuristic method which just picks data corresponding to small diagonal elements in $K^{-1}$.

A second way of ensuring that the relationship found in Figure \ref{fig:correspondence} is meaningful is to observe the behavior of different data selection methods over a training data sweep. We have done this in Figure \ref{fig:tr_sweep_cmpr}. On the left hand side of Figure \ref{fig:tr_sweep_cmpr}, we have plotted training data size against the trace term (for MNIST under the cosine kernel) for five such methods of training data selection: random selection, facility location, uncertainty sampling, and the inverse heuristic from the last paragraph. On the right hand side of Figure \ref{fig:tr_sweep_cmpr}, we have a similar plot where the trace term is replaced by classification error of a fully connected feed forward neural network with $1000$ hidden units. We see that the behavior of the trace term plots are carried over to the classification error plots. Some small scale information is lost, mostly due to the fact that the error plot is more noisy, but the main global properties are intact. This correspondence of behavior further shows that there is a link, independent of training data size, between our bound and classification error.

\section{Conclusion}
This paper has provided a novel information theoretic perspective on active learning methods. It has provided an information theoretic proof of the viability of the facility location function data selection method, and derived a new information theoretic bound, written in terms of the objective function of Transductive Experimental Design, which is highly applicable to evaluating and analyzing other active learning strategies. Experiments show that this bound is indicative of dataset quality in terms of classification accuracies.

\bibliographystyle{IEEEtran}
\bibliography{references}
\end{document}